\definecolor{linkblue}{rgb}{0.1,0.1,0.8}
\newcommand{\assign}{\leftarrow}
\newtheorem{theorem}{Theorem}[section]
\newtheorem{remark}[theorem]{Remark}
\newcommand{\N}{\mathbb{N}}
\newcommand{\R}{\mathbb{R}}
\newcommand{\II}{\mathbb{I}}
\renewcommand{\epsilon}{\varepsilon}
\newcommand{\eps}{\varepsilon}
\DeclareMathOperator{\E}{\mathbb{E}}
\newcommand{\Bin}{\textsc{Bin}\xspace}
\newcommand{\Poi}{\textsc{Poi}\xspace}
\newcommand{\smin}{s_{\text{min}}}
\newcommand{\ooea}{$(1 + 1)$-EA\xspace}
\newcommand{\olea}{$(1 + \lambda)$-EA\xspace}
\newcommand{\moea}{$(\mu + 1)$-EA\xspace}
\newcommand{\moga}{$(\mu + 1)$-GA\xspace}
\newcommand{\ollga}{$(1 + (\lambda,\lambda))$-GA\xspace}
\newcommand{\fooea}{$(1 + 1)$-fEA\xspace}
\newcommand{\folea}{$(1 + \lambda)$-fEA\xspace}
\newcommand{\fmoea}{$(\mu + 1)$-fEA\xspace}
\newcommand{\fmoga}{$(\mu + 1)$-fGA\xspace}
\newcommand{\onemax}{\textsc{OneMax}\xspace}
\newcommand{\OM}{\textsc{Om}\xspace}
\newcommand{\Binval}{\textsc{BinVal}\xspace}
\newcommand{\hottopic}{\textsc{HotTopic}\xspace}
\newcommand{\HT}{\textsc{HT}\xspace}
\begin{document}

\title{A General Dichotomy of Evolutionary Algorithms on Monotone Functions}
\author{
Johannes Lengler\\
       {ETH Z{\"u}rich}\\
       {Z\"urich, Switzerland}
}
\maketitle

\begin{abstract}
It is known that the \ooea with mutation rate $c/n$ optimises every monotone function efficiently if $c<1$, and needs exponential time on some monotone functions (\hottopic functions) if $c\geq 2.2$. We study the same question for a large variety of algorithms, particularly for \olea, \moea, \moga, their fast counterparts like fast \ooea, and for \ollga. We find that all considered mutation-based algorithms show a similar dichotomy for \hottopic functions, or even for all monotone functions. For the \ollga, this dichotomy is in the parameter $c\gamma$, which is the expected number of bit flips in an individual after mutation and crossover, neglecting selection. For the fast algorithms, the dichotomy is in $m_2/m_1$, where $m_1$ and $m_2$ are the first and second falling moment of the number of bit flips. Surprisingly, the range of efficient parameters is not affected by either population size $\mu$ nor by the offspring population size $\lambda$. 

The picture changes completely if crossover is allowed. The genetic algorithms \moga and \fmoga are efficient for arbitrary mutations strengths if $\mu$ is large enough.
\end{abstract}

\sloppy{
\section{Introduction}
\label{sec:Intro}
For evolutionary algorithms (EAs), choosing a good mutation strength is a delicate matter that is subject to conflicting goals. For example, consider a pseudo-boolean fitness function $f:\{0,1\}^n \to \R$ with standard bit mutation, i.e., all bits are flipped independently. On the one hand, if the mutation strength is too low then the progress is also slow, and the algorithm will be susceptible to local optima. On the other hand, if the mutation rate is too high and the parent is already close to a global optimum then typically the offspring, even if it has a ``good'' mutation in it, will also have a large number of detrimental mutations. A widely known example of this tradeoff are linear functions (e.g., \onemax), for which there is an optimal mutation rate $1/n$: this rate minimises the expected runtime, i.e., the expected number of function evaluations before the optimum is found. Every deviation from this mutation rate to either direction will decrease the performance. 

A different, more extreme example are (strictly) monotone pseudo-boolean functions.\footnote{We will be sloppy and drop the term ``strictly'', but throughout the paper we always mean strictly monotone functions.} A function $f:\{0,1\}^n \to \R$ is \emph{strictly monotone} if for every $x,y \in \{0,1\}^n$ with $x\neq y$ and such that $x_i \geq y_i$ for all $i \in [n]$ it holds $f(x) > f(y)$. In particular, every monotone function has a unique global optimum at $(1\ldots1)$. Moreover, every such function is efficiently optimised by \emph{random local search} (RLS), which is the $(1+1)$ algorithm that flips in each round exactly one bit, uniformly at random. From any starting point, RLS can make at most $n$ improving steps before finding the optimum, and by a coupon collector argument it will optimise any monotone function in time $O(n\log n)$. Thus, monotone function might be regarded as trivial to optimise, and we might expect every standard EA to solve them efficiently.

However, this is not so. Doerr, Jansen, Sudholt, Winzen, and Zarges showed~\cite{DoerrJSWZ10,doerr2013mutation} that even the $(1+1)$ evolutionary algorithm \ooea, which flips each bit independently with mutation rate $c/n$, may have problems. More precisely, for small mutation rate, $c<1$, the \ooea has expected runtime $O(n\log n)$ as desired, but for large mutation rate, $c >16$, there are monotone functions for which the \ooea needs exponential time. Lengler and Steger~\cite{lengler2016drift} gave a simpler construction of such ``hard'' monotone functions, which we call \hottopic, and which yield exponential runtime for $c\geq 2.13..$. The basic idea of this construction is that at every point in time there is some subset of bits which form a ``hot topic'', i.e., the algorithm considers them much more important than the other bits. An algorithm with a large mutation rate that focuses too much on the current hot topic tends to deteriorate the quality of the remaining bits. If the hot topic changes often, then the algorithm stagnates.\smallskip

Since both low and high mutation rates have their disadvantages, many different strategies have been developed to gain the best of two worlds. In this paper we pick a collection of either traditional or particularly promising methods, and analyse whether they can overcome the detrimental effect of the \hottopic functions. In particular, we consider (for constant $\mu, \lambda$) the classical \olea, \moea, and \moga, the \ollga by Doerr, Doerr, and Ebel~\cite{doerr2015black}, and the recently proposed fast \olea, fast \moea, and fast \moga~\cite{doerr2017fast}, which we abbreviate by \fooea, \folea and \fmoga, respectively. Surprisingly, for mutation-based algorithms neither $\mu$ nor $\lambda$ have any effect on the results. While we do obtain a fine-grained landscape of result (see below), one major trend is prevailing: crossover helps!

\subsection{Results}

In this section we collect our results for the different algorithms. An overview can be found in Table~\ref{tab:results}. 
Note that, unless explicitly otherwise stated, we always assume that the parameters $\mu,\lambda, c, \gamma$ of the algorithms are constant. 

\noindent \textbf{Classical EA's.} For the classical evolutionary algorithm \olea, we show a dichotomy: If the mutation parameter $c$ is sufficiently small, then the algorithms optimise all monotone functions in time $O(n \log n)$, while for large $c$ the algorithm needs exponential time on some \hottopic functions. 
The interesting question is: how does the threshold for $c$ depend on the parameters $\lambda$? It may seem that a large $\lambda$ bears some similarity with an increased mutation rate. After all, the total number of mutations in each generation is increased by a factor $\lambda$. Thus, we might expect that the \olea has difficulties with monotone functions for even smaller values of $c$. However, this is not so. The bounds on the mutation rate, $c<1$, and $c> 2.13..$ does \emph{not} depend on $\lambda$. In fact, for the \hottopic functions we can show that this is tight: If $c< 2.13..$ then the \olea and the \moga optimise all \hottopic functions in time $O(n\log n)$, while for $c>2.13..$ it is exponentially slow on some \hottopic instances.

For the \moea we get the same result on \hottopic. In particular, the threshold on $c$ is also independent of $\mu$. For the \moea, we could not show an upper runtime bound for \emph{all} monotone functions in the case $c<1$, so currently we can not exclude that the situation might get even \emph{worse} for larger $\mu$, as there might still be other monotone functions which are hard for the \moea with $c<1$.

\noindent $\bm{(\mu+1)}$\textbf{-GA.} The picture changes completely if we allow crossovers, i.e., we consider the \moga instead of the \moea. We show that for the \hottopic functions this extends the range of mutation rate arbitrarily. For every $c>0$, if $\mu$ is a sufficiently large constant then the \moga finds the optimum of \hottopic in time $O(n\log n)$. At present, there are no monotone functions known on which the \moga with arbitrary $c$ and large $\mu = \mu(c)$ is slow. It remains an intriguing open question whether the \moga with large $\mu$ is fast on \emph{every} monotone function.

\noindent $\bm{(1+(\lambda,\lambda))}$\textbf{-GA.} This algorithm creates $\lambda$ offsprings, and uses the best of them to perform $\lambda$ biased crossovers with the parent, see Section~\ref{sec:algorithms}. The best crossover offspring is then compared with the parent. This algorithm has been derived by Doerr, Doerr, and Ebel~\cite{doerr2013lessons,doerr2015black} from a theoretical understanding of so-called \emph{black-box complexity}, and has been intensively studied thereafter~\cite{doerr2015tight,doerr2015optimal,doerr2016optimal,doerr2017optimal}. Most remarkably, it gives an asymptotic improvement on the runtime of the most intensively studied test function \onemax, on which it can achieve runtime roughly $n \sqrt{\log n}$ for static settings (up to $\log\log n$ terms), and linear runtime $O(n)$ for dynamic parameter settings. These runtimes are achieved with non-constant $\lambda = \lambda(n)$. The \ollga is arguably the only known natural unbiased evolutionary algorithm that can optimise \onemax faster than $\Theta(n \log n)$. 

The algorithm comes with three parameters, the offspring population size $\lambda$, the mutation rate $c/n$ by which the offsprings are created, and a crossover bias $\gamma$ which is the probability to take the offspring's genes in the crossover. Again we find a dichotomy between weak and strong mutation, but this time not in $c$, but rather in the product $c\gamma$. In~\cite{doerr2017optimal} it is suggested to choose $c,\gamma$ in such a way that $c\gamma =1$. Note that this makes sense, because $c\gamma$ is (neglecting possible biases by the selection process) the expected number of mutations in the crossover child. Thus it is plausible that it plays a similar role as the parameter $c$ in classical algorithms. Indeed we find that for $c\gamma <1$ the runtime is small for every monotone function, while for $c\gamma > 2.13..$ it is exponential on \hottopic functions. As before, the bound is tight  for \hottopic, i.e. for $c\gamma < 2.13..$ the \ollga needs time $O(n\log n)$ to optimise \hottopic. 

Notably, the runtime benefits on \onemax carry over, at least to the \hottopic function. Since the benefits on \onemax in previous work have been achieved for non-constant parameter choices, we relax our assumption on constant parameters for the \ollga. More precisely, we show that if $\eps < c\gamma<1-\eps$ for a constant $\eps>0$, then for \emph{any} choice of $c,\gamma,\lambda$ (including non-constant and/or adaptive choices), the \ollga optimises every monotone function in $O(n\log n)$ generations. Moreover, we show that for the optimal static parameter and adaptive parameter settings in~\cite{doerr2017fast}, the algorithm achieves the same asymptotic runtime on \hottopic as on \onemax, in particular runtime $O(n)$ in the adaptive setup.\footnote{Strictly speaking, the adaptive parameter choice is not natural for \hottopic, since the parameters must be chosen as a function of the remaining zero-bits in the search points. For \hottopic, or for general monotone functions, this information is not naturally available. However, in~\cite{doerr2017optimal} it was shown that the same effect can be achieved for the \ollga by an adaptive (self-adjusting) setup using the one-fifth rule, which \emph{is} applicable for monotone functions.}

Unfortunately, it seems unlikely that the runtimes of $o(n\log n)$ for \onemax carry over to arbitrary monotone functions, because they are achieved by increasing $c$ and $\lambda$ with $n$ (although $c\gamma$ is left constant). For \onemax, if there is a zero-bit that is flipped in one of the mutations, then this mutation is always selected for crossovers. In the most relevant regime where the expected number of flipped zero-bits in \emph{any} mutation is small (say, at most one), the probability to be selected increases by a factor of $\Theta(\lambda)$ (from $1/\lambda$ to $\Theta(1)$) if a zero-bit is flipped. For monotone functions we do show that the probability of being selected can only increase with the number of flipped zero-bits. However, there is no apparent reason that it should increase by a factor of $\Theta(\lambda)$, or by any significant factor at all. In fact, it is not hard to see that for the linear function \Binval it only increases by a constant factor. 

\noindent \textbf{Fast }$\bm{(1+1)}$\textbf{-EA}, \textbf{fast } $\bm{(1+\lambda)}$\textbf{-EA}, \textbf{fast }$\bm{(\mu+1)}$\textbf{-EA.} These algorithms, which we abbreviate by \fooea, \folea, and \fmoea have recently been proposed by Doerr and Doerr~\cite{doerr2017fast}, and they have immediately attracted considerable attention~(e.g,~\cite{mironovich2017evaluation}). The idea is to replace the standard bit mutation, in which each bit is flipped independently, by a \emph{heavy-tailed} distribution $\mathcal D$. That is, in each round we draw a number $s$ from some heavy-tailed distribution (for example, a \emph{power-law distribution} with $\Pr[s = k] \sim k^{-\kappa}$ for some $\kappa >1$, also called \emph{Zipf distribution}). Then the mutation is generated from the parent by flipping exactly $s$ bits. In this way, most mutations are generated by flipping only a small number of bits, but there is a substantially increased probability to flip many bits. This approach has given some hope to unify the best of the two worlds, of small mutation rate and of large mutation rate.

For monotone functions, our results are rather discouraging. This is not completely unexpected since the algorithms build on the very idea of increasing the probability of large mutation rates. We show a dichotomy for the \fooea with respect to $m_2/m_1$, where $m_1 := \E[s]$ and $m_2 := \E[s(s-1)]$ are the first and second falling moment of the distribution $\mathcal D$, although the results are subject to some technical conditions.\footnote{Note that a heavy tail generally increases $m_2$ much stronger than $m_1$, so it increases the quotient $m_2/m_1$.} As before, if $m_2/m_1<1$ then the runtime is $O(n\log n)$ for all monotone functions. On the other hand, if $m_2/m_1\geq 2.13..$ and additionally $p_1:=\Pr[\mathcal D = 1]$ is sufficiently small then the runtime on some \hottopic instances is exponential. As for the other functions, we get a sharp threshold for the parameter regime that is efficient on \hottopic, so we can decide for each distribution whether it leads to fast or to exponential runtimes on \hottopic. Due to a correction term related to $p_1$ (Equation~\eqref{eq:hard0} on page \pageref{eq:hard0}), it \emph{is} possible to construct heavy-tail distributions which are efficient on all \hottopic functions, but they must be chosen with great care. For example, no power-law distribution with exponent $\kappa \in (1,2)$ is efficient, which includes the choice $\kappa =1.5$ that is used for experiments in~\cite{doerr2017fast} and~\cite{mironovich2017evaluation}. Also, no distribution with $p_1 < \tfrac49 \Pr[\mathcal D = 3]$ is efficient on \hottopic. In general, our findings contrast the results in~\cite{doerr2017fast}, where larger tails (smaller $\kappa$) lead to faster runtimes.

As before, larger values of $\lambda$ and $\mu$ do not seem to have any influence as long as crossover is not allowed. For the \folea and \fmoea, we show exactly the same results as for the \fooea, except that we could not show runtime bounds for \emph{all} monotone functions if $m_2/m_1<1$. Rather, we only show them for \hottopic. Thus we couldn't exclude the possibility that larger values of $\lambda,\mu$ make things even worse.
\smallskip

\noindent \textbf{Fast }$\bm{(\mu+1)}$\textbf{-GA.} As for the classical algorithms, crossover tremendously improves the situation. For every distribution $\mathcal D$ with $\Pr[\mathcal D=1] = \Omega(1)$, if $\mu$ is a sufficiently large constant then the \fmoga optimises \hottopic in time $O(n\log n)$. As for the \moga, it is an open question whether the same result carries over to \emph{all} monotone functions.\smallskip

\noindent \textbf{Further results.} For all algorithms, the regime of exponential runtime does not just mean that it is hard to find the optimum, but rather the algorithms do not even come close. More precisely, in all these cases there is an $\eps >0$ (depending only on $c$ or on the other dichotomy parameters) such that the probability that any of the EA's or GA's finds a search point with at least $(1-\eps)n$ correct bits within a subexponential time is exponentially small as $n \to \infty$. The size of $\eps$ can be quite considerable if the parameter $c$ is much larger than $2.13..$. For example, simulations suggest for the \ooea that $\eps \approx 0.15$ for $c=4$.\footnote{The parameters of the \hottopic function were $n=10,000$, $\alpha = 0.25$, $\beta = 0.05$, $\eps=0.05$, with $100$ levels, and the \ooea was run with $c= 0.9$ or $c=4$. We found that for $c=0.9$ the algorithm had optimised $99.09\% \pm 0.07$ of the bits after $100,000$ rounds and $99.98\% \pm 0.01$ after $200,000$ rounds, where the number after $\pm$ is the standard deviation. For $c=4$ the algorithm had only optimised $85.08\% \pm 0.53$ after $100,000$ rounds and this number did not visibly increase after $200,000$ rounds ($85.05\% \pm 0.40$) or $500,000$ rounds ($84.77\% \pm 0.32$). Each data point was computed from $20$ independent runs. No run with $c=4$ reached a level larger than $73$, while all runs with $c=0.9$ reached the maximum level in $100.000$ rounds.} On the other hand, starting close to the optimum does not help either: for every $\eps >0$ there are monotone function such that if the EA's or GA's are initialised with random search points with $\eps n$ incorrect bits, then still the algorithms need exponential time to find the optimum. \smallskip

\noindent \textbf{Summary.} It appears that increasing the number of offspring $\lambda$ or the population size $\mu$ does not help at all to overcome the detrimental effects of large mutation rate in evolutionary algorithms. All EA's are highly vulnerable even to a very moderate increase of the mutation rate. Using heavy tails as in the fEA's seems to make things even worse, although the picture gets more complicated. On the other hand, using crossover can remedy the effect of large mutation rates, and can extend the range of good mutation rates arbitrarily.

\subsection{Intuition on {\textsc \textbf HotTopic}}
We now give an intuition why the \hottopic functions are hard to optimise for large mutation rates. Note that a monotone function, by its very definition, has a local ``gradient'' that always points into the same corner of the hypercube, in the sense that for each bit individually, in all situations we prefer a one-bit over a zero-bit. The construction by Lengler and Steger~\cite{lengler2016drift} distorts the gradient by assigning different positive weights to the components. Such a distortion can not alter the direction of the gradient by too much. In particular, following the gradient will always decrease the distance from the optimum. This is why algorithms with small mutation rate may find the optimum; they follow the gradient relatively closely. However, the weights in~\cite{lengler2016drift} are chosen such that there is always a ``hot topic'', i.e., a subdirection of the gradient which is highly preferred over all other directions. Focusing too much on this ``hot topic'' will lead to a behaviour that is very good at optimising this particular aspect -- but all other aspects will deteriorate a little because they are out of focus. Thus if the ``hot topic'' is sufficiently narrow and changes often, then advances in this aspect will be overcompensated by decline in the neglected parts, which leads overall to stagnation. 

This last sentence is not merely a pessimistic allegory on scientific progress, but it also happens for evolutionary algorithms with large mutation rates. They will put the currently preferred direction above everything else, and will accept any mutation that makes progress in that direction, regardless of the harm that such a mutation may cause on other bits. This may lead in total to a drift away from the optimum, since random walk steps naturally tend to increase the distance from the optimum. For the fEA's or fGA's, this effect is amplified if the algorithm is close to the optimum. Then the probability to find any improvement at all is very small, and we typically find an improvement in an aggressive step in which many bits are flipped. Then the same step also typically causes a lot of errors among the low-priority bits. For the same reason, an adaptive choice of the mutation strength $c$ may be harmful if it increases the mutation parameter in phases of stagnation: close to the optimum, most steps are stagnating steps, so an adaptive algorithm might react by increasing the mutation parameter. This indeed increases the probability to find a better search point in the ``hot topic'' direction (though not the probability to make \emph{any} improvement), and may thus lead fatally to a large mutation parameter.



 \begin{landscape}
 \begin{table*}
\begin{center}
\begin{tabular}{l|l|l|l|l}
\multirow{2}{*}{\textbf{Algorithm}}    
& \textbf{$\bm{O(n\log n)}$ on} 
& \textbf{$\bm{O(n\log n)}$ on} 
& \textbf{$\bm{e^{\Omega(n)}}$ on} 
& \multirow{2}{*}{\textbf{Remarks}} \\ 
& \textbf{mon.\! funct's} 
& \textbf{\hottopic} 
& \textbf{\hottopic} 
&  \\ \hline 
\ooea 
	& $c<1$ \cite{lengler2016drift} 
	& $c < 2.13..$ 
	& $c>2.13..$\cite{lengler2016drift}\\
\olea
	& $c<1$  
	& $c < 2.13..$  
	& $c > 2.13..$  \\ 
\moea
	& ??  
	& $c < 2.13..$  
	& $c > 2.13..$  \\ 
	\hline
\moga
	& ??  
	& $c$ arbitrary$^a$ 
	& only if $\mu$ too small  
	& $^a$for $\mu = \mu(c)$ large enough\\ 
	\hline
\ollga
	& $c\gamma<1^b$  
	& $c\gamma < 2.13..^{b,c} $  
	& $c\gamma > 2.13..$ 
	& $^b$holds also if $c,\gamma, \lambda$ depend on $n$ and/or are adaptive\\ 

	&   
	&   
	&   
	& $^{c}$achieves \onemax runtimes $\approx n\sqrt{\log n}$ and $O(n)$ for  \\ 

	&   
	&   
	&   
	& \phantom{$^{c}$}optimal~\cite{doerr2015black,doerr2017optimal} static and adaptive parameters, resp. \\ 
	\hline
\fooea
	& $m_2/m_1<1$  
	& $m_2/m_1 < 1$  
	& $m_2/m_1 > 1^{d}$  
	& $^d$only if $\Pr[\mathcal D =1]$ is small enough.\\ 

	&  
	& $\Phi<1^e$  
	& $\Phi >1^e$  
	& $^e$$\Phi$ is similar to $m_2/m_1$, but has correction term for \\ 

	&   
	&  
	&   
	& \phantom{$^{e}$}$\Pr[\mathcal D =1]$, see~\eqref{eq:hard0} on page~\pageref{eq:hard0}. \\ 
	\hline

\folea
	& $m_2/m_1<1^f$  
	& $m_2/m_1 < 1$  
	& $m_2/m_1 > 1^{g}$  
	& $^f$if starting point is at most $\eps n$ from optimum.\\ 

	&  
	& $\Phi<1^h$
	& $\Phi >1$ 
	& $^g$only if $\Pr[\mathcal D =1]$ is small enough.\\

	&   
	&   
	& any power law, exp.\! $<2$  
	&  $^h$if $\Pr[\mathcal D=1] = \Omega(1)$.\\ 

	&   
	&   
	& $\Pr[\mathcal D=1] < 4/9\Pr[\mathcal D =3]$  
	&  \\ 

	\hline

\fmoea
	& ??  
	& $m_2/m_1 < 1^{i}$  
	& $m_2/m_1 > 1^{i,j}$ 
	& $^i$if $\Pr[\mathcal D =0] = \Omega(1)$.\\ 

	&  
	& $\Phi<1^i$
	& $\Phi >1^i$ 
	& $^j$only if $\Pr[\mathcal D =1]$ is small enough.\\

	&   
	&   
	& any power law, exp.\! $<2$\;$^i$  
	& \\ 

	&   
	&   
	& $\Pr[\mathcal D=1] < 4/9\Pr[\mathcal D =3]$\;$^i$  
	&  \\ 

	\hline
\fmoga
	& ??  
	& $\mathcal D$ arbitrary$^k$  
	& only if $\mu$ too small  
	& $^k$for $\mu= \mu(\mathcal D)$ large enough, if $\Pr[\mathcal D =0] = \Omega(1)$.\\ 
	
\end{tabular}
\end{center}
\caption{Overview over the results of this paper. Each entry gives a sufficient condition for the runtime statement of the corresponding column. If several lines are in one cell, then each line is a sufficient condition. Unless otherwise stated, $c,\lambda,\mu = \Theta(1)$. All results except for the \ooea are proven in this paper. The results of the first column are in Theorems~\ref{thm:easymain} and~\ref{thm:easyweakmain1}, the results of the next two columns are in Theorem~\ref{thm:concrete}, except that Remark c is in Theorem~\ref{thm:easymain}.}
\label{tab:results}
\end{table*}
\end{landscape}

\section{Preliminaries and Definitions}
\label{sec:definitions}

\subsection{Notation}
\label{sec:notation}

Throughout the paper we will assume that $f : \{0,1\}^n\to \R$ is a monotone function, i.e., for every $x,y \in \{0,1\}^n$ with $x\neq y$ and such that $x_i \geq y_i$ for all $1\leq i \leq n$ it holds $f(x) > f(y)$.\footnote{Note that this property might more correctly be called \emph{strictly monotone}, but in this paper we will stick with the shorter, slightly less precise term \emph{monotone}. In all other cases we use the standard terminology, e.g.\! the term \emph{increasing sequence} has the same meaning as \emph{non-decreasing sequence}.} We will consider algorithms that try to maximise $f$, and we will mostly focus on the \emph{runtime} of an algorithm, which we define as the number of function evaluations before the algorithm evaluates for the first time the global maximum of $f$.

We say that an EA or GA is~\emph{elitist}~\cite{doerr2017introducing} if the selection operator greedily chooses the fittest individuals to form the next generation. We call an EA or GA \emph{unbiased}~\cite{LehreW12} if the mutation and crossover algorithm are invariant under the isomorphisms of $\{0,1\}^n$, i.e., if mutation and crossover are symmetric with respect to the ordering of the bits, and with respect to exchange of the values $0$ and $1$. All algorithms considered in this paper are unbiased. 

For $n\in \N$, we denote $[n] := \{1,\ldots,n\}$. We use the notation $x = y\pm z$ to abbreviate $x \in [y-z, y+z]$. For a search point $x$, we write $\OM(x)$ for the \onemax potential, i.e., the number of one-bits in $x$. For $x\in \{0,1\}^n$ and $\emptyset \neq I \subseteq [n]$, we denote by $d(I,x) := |\{i\in I \mid x_i=0\}|/|I|$ the \emph{density} of zero bits in $I$. In particular, $d([n],x) = 1-\OM(x)/n$. 

All Landau notation $O(n), o(n), \ldots$ is with respect to $n\to \infty$. For example, $\lambda = O(1)$ means that there is a constant $C>0$, independent of $n$, such that $\lambda = \lambda(n) \leq C$ for all $n\in \N$. We say that an event $\mathcal E = \mathcal E(n)$ holds \emph{with high probability} or \emph{whp} if $\Pr[\mathcal E(n)] \to 1$ for $n\to\infty$. We say that $\mathcal E(n)$ is \emph{exponentially unlikely} if $\Pr[\mathcal E(n)] = e^{-\Omega(n)}$, and that is \emph{exponentially likely} if $\Pr[\mathcal E(n)] = 1-e^{-\Omega(n)}$. 

For an event $\mathcal E$, we denote by $\II[\mathcal E]$ the indicator variable which is one if $\mathcal E$ occurs, and zero otherwise. For a distribution $\mathcal D$, by abuse of notation write $\Pr[\mathcal D = x]$ for $\Pr[X= x \mid X \sim \mathcal D]$.

Throughout the paper, we will be slightly sloppy about conditional probabilities $\Pr[A \mid B]$ and expectation, and we will ignore cases in which $Pr[B]=0$ (e.g., in Theorem~\ref{thm:multiplicative}). We use the term \emph{increasing function} as equivalent to the term \emph{non-decreasing function}, and likewise for \emph{decreasing function}. The only exception from that pattern is for the term \emph{monotone}, where monotone functions are automatically assumed to be strictly monotone.

Finally, throughout the paper we will use $n$ for the dimension of the search space $\mu$ and $\lambda$ for the population size and offspring population size, respectively, $c$ for the mutation parameter, $\gamma$ for the crossover parameter of the \ollga, and $\mathcal D, m_1,m_2$ for the bit flip distribution of the fast EA's and GA's and its first and second moment $\E[s\mid s\sim \mathcal D]$ and $\E[s(s-1) \mid s\sim \mathcal D]$, respectively. Unless otherwise stated, we will assume that $\mu,\lambda,c,\gamma= \Theta(1)$ and $m_1 = \Omega(1)$.

\subsection{Algorithms}
\label{sec:algorithms}

\begin{algorithm2e}
 \textbf{Initialization:} \\
 \Indp
 $X \assign \emptyset$\;
 \For{$i=1,\ldots,\mu$}{
Sample $x^{(i)}$ uniformly at random from $\{0,1\}^n$\;
 $X \assign X \cup \{ x^{(i)}\}$\;
 }
 \Indm
 \textbf{Optimization:}	
 \For{$t=1,2,3,\ldots$}{
 \For{$i=1,2,\ldots \lambda$}{
 		For GA, flip a fair coin to do either a mutation or a crossover; for EA, always do a mutation. \\
                 \textbf{Mutation:} \\
		Choose $x\in X$ uniformly at random\;
		\label{line:mutation} Create $y^{(j)}$ by flipping each bit in $x$ independently with probability $c/n$\;
		\textbf{Crossover:} \\ \label{line:crossover}
		Choose $x,x'\in X$ independently uniformly at random\;
		Create $y^{(j)}$ by setting $y^{(j)}_i$ to either $x_i$ or $x_i'$, each with probability $1/2$, independently for all bits\; \label{line:endcrossover} 
	 }
\textbf{Selection:}\\
 Set $X \assign X \cup \{y^{(1)},\ldots,y^{(\lambda)}\}$\;

  \For{$i=1,\ldots, \lambda$}{
  	\label{line:selection} Select $x \in \arg\min \{f(x)\mid x\in X\}$ (break ties randomly) and update $X \assign X \setminus \{x\}$\;}

	 }
 \caption{The $(\mu+\lambda)$-EA or $(\mu+\lambda)$-GA with mutation parameter $c$ for maximizing an unknown fitness function $f:\{0,1\}^n \rightarrow \R$. The EA-algorithms skip the crossover step, line~\ref{line:crossover} to~\ref{line:endcrossover}. $X$ is a multiset, i.e., it may contain search points several times.}
\label{alg:mulambda}
\end{algorithm2e}

Most algorithms that we consider fall in the class of $(\mu+\lambda)$ evolutionary algorithms, \emph{$(\mu+\lambda)$-EAs}, or $(\mu+\lambda)$ genetic algorithms, \emph{$(\mu+\lambda)$-GAs}. They can be described by the framework in Algorithm~\ref{alg:mulambda}. In a nutshell, they maintain a population of size $\mu$. In each \emph{generation}, $\lambda$ additional offspring are created by \emph{mutation} and possibly \emph{crossover}, and the $\mu$ search points of highest fitness among the $\mu+\lambda$ individuals form the next generation. Thus we use an \emph{elitist selection} scheme. In EAs, the offspring are only created by \emph{mutation}, in GAs they are either created by mutation or by crossover. For mutation we use standard bit mutation as a default, in which each bit is independently flipped with probability $c/n$, where $c$ is the \emph{mutation parameter}. The only exception are the \emph{fast} EAs and GAs, in which first the number $s$ of bit mutations is drawn from some distribution $\mathcal D = \mathcal D(n)$, and then exactly $s$ bits are flipped, chosen uniformly at random. Recall that we will denote by $m_1 := \E[s]$ and $m_2 := \E[s(s-1)]$ the first and second falling moment of $\mathcal D$, respectively. We will always assume that $\mu, \lambda, c= \Theta(1)$.\footnote{There are many variants of the algorithms that we use here. For example, for GAs it is not important that the probability for crossover is exactly $1/2$. In fact, it is also common in GAs to create each offspring by a crossover \emph{and} a mutation. The results of the paper carry over to these variants.}

An exception to the above scheme is the \ollga~\cite{doerr2013lessons}. Here the population consists of a single search point $x$. Then in each round, we pick $s \sim \Bin(n,c/n)$, and create $\lambda$ offspring from $x$ by flipping exactly $s$ bits in $x$ uniformly at random. Then we select the fittest offspring $y$ among them, and we perform $\lambda$ independent biased crossover between $x$ and $y$, where for each bit we take the parent gene from $y$ with probability $\gamma$, and the gene from $x$ otherwise. If the best of these crossover offspring is at least as fit as $x$, then it replaces $x$. We will usually assume that $\lambda, c, \gamma = \Theta(1)$, unless otherwise mentioned.

\subsection{Hard Monotone Functions: HotTopic}
In this section we give the construction of hard monotone functions by Lengler and Steger~\cite{lengler2016drift}, following closely their exposition. The functions come with four parameters $\alpha,\beta,\rho, \eps$, and they are given by a randomised construction. We call the corresponding function $\hottopic_{\alpha,\beta,\rho,\eps} = \HT_{\alpha,\beta,\rho,\eps} = \HT$. The hard regime of parameters is
\begin{equation}\label{eq:constantsmonotone}
1 > \alpha \gg \eps \gg \beta \gg  \rho >0,
\end{equation}
by which we mean that $\alpha \in (0,1)$ is a constant, $\eps = \eps(\alpha)$ is a sufficiently small constant, $\beta = \beta(\alpha,\eps)$ is a sufficiently small constant, and $\rho = \rho(\alpha,\eps,\beta)$ is a sufficiently small constant.

Now we come to the construction. For $1 \leq i \leq e^{\rho n}$ we choose sets $A_i \subseteq [n]$ of size $\alpha n$ independently and uniformly at random, and we choose subsets $B_i\subseteq A_i$ of size $\beta n$ uniformly at random. We define the {\em level} $\ell(x)$ of a search point $x\in\{0,1\}^n$ by 
\begin{equation}\label{eq:level}
\ell(x) := \max \{ \ell' \in [e^{\rho n}] : |\{j \in B_{\ell'} : x_j = 0\} |\le \eps \beta n\},
\end{equation}
where we set $\ell(x)=0$, if no such $\ell'$ exists). Then we define $f: \{0,1\}^n \to \R$ as follows:
\begin{equation}\label{eq:hottopic}
\HT(x) :=\ell(x) \cdot n^{2} +  \sum_{i\in A_{\ell(x)+1}}x_i\cdot n +  \sum_{i\not\in A_{\ell(x)+1}} x_i, 
\end{equation}
where for $\ell = e^{\rho n}$ we set $A_{\ell+1} := B_{\ell+1} := \emptyset$.

So the set $A_{\ell+1}$ defines the ``hot topic'' while the algorithm is at level $\ell$, where the level is determined by the sets $B_i$. 
It was shown in~\cite{lengler2016drift} that whp\footnote{with high probability, i.e. with probability tending to one as $n\to\infty$.} 
the (1+1)-EA with mutation parameter $c\geq 2.2$ needs exponential time to find the optimum.


\subsection{Tools}\label{sec:tools}
We will make frequent use of the following two well-known drift theorems. The first one is the multiplicative drift theorem~\cite{DoerrJW12}.
\begin{theorem}[Multiplicative Drift]
\label{thm:multiplicative}
Let $S \subseteq \R^+$ be a finite set with minimum $\smin >0$. Let $\{X^{(t)}\}_{t\in \N}$ be a sequence of random variables over $S \cup \{0\}$. Let $T$ be the random variable that denotes the first point in time $t\in\N$ for which $X^{(t)} =0$. Suppose that there exists a constant $\delta>0$ such that
\[
\E[X^{(t)} - X^{(t+1)} \mid X^{(t)} =s] \geq \delta s 
\]
holds for all $s \in S$. Then for all $s_0 \in S$,
\[
\E[T\mid  X^{(0)} =s_0] \leq \frac{1+\ln(s_0/\smin)}{\delta}.
\]
Moreover, for all $t \geq 0$,
\begin{align*}
\Pr\left[T >  \left\lceil\frac{t+\ln(s_0/\smin)}{\delta}\right\rceil\right] \leq e^{-t} .
\end{align*}
\end{theorem}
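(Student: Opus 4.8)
The plan is to reduce the statement to the additive drift theorem of He and Yao by an exponential rescaling of the potential, and to obtain the tail bound separately by a Markov-inequality argument applied to a killed copy of the process. First I would introduce the rescaled potential $Y^{(t)} := 1 + \ln\!\left(X^{(t)}/\smin\right)$ whenever $X^{(t)} > 0$, and $Y^{(t)} := 0$ whenever $X^{(t)} = 0$. Since every $s \in S$ satisfies $s \ge \smin$, we have $Y^{(t)} \ge 1$ on $\{X^{(t)} \ne 0\}$, so $Y^{(t)} \ge 0$ always and $Y^{(t)} = 0$ exactly when $X^{(t)} = 0$; in particular $T$ is also the first hitting time of $0$ for $(Y^{(t)})$, and $\E[Y^{(0)} \mid X^{(0)} = s_0] = 1 + \ln(s_0/\smin)$.

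Next I would establish an additive drift of at least $\delta$ for $(Y^{(t)})$, using the elementary inequality $\ln z \le z - 1$ for $z > 0$. Fix $s \in S$ and condition on $X^{(t)} = s$. If $X^{(t+1)} = s' \in S$, then $Y^{(t)} - Y^{(t+1)} = \ln(s/s') = -\ln(s'/s) \ge 1 - s'/s$; if $X^{(t+1)} = 0$, then $Y^{(t)} - Y^{(t+1)} = 1 + \ln(s/\smin) \ge 1 = 1 - X^{(t+1)}/s$. So in every case $Y^{(t)} - Y^{(t+1)} \ge 1 - X^{(t+1)}/s$ pointwise. Taking conditional expectations and using that the multiplicative drift hypothesis is precisely $\E[X^{(t+1)} \mid X^{(t)} = s] \le (1-\delta)s$, we obtain $\E[Y^{(t)} - Y^{(t+1)} \mid X^{(t)} = s] \ge 1 - (1-\delta) = \delta$ for every $s \in S$. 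The additive drift theorem — whose proof is the observation that $Y^{(t \wedge T)} + \delta\,(t \wedge T)$ is a nonnegative supermartingale, hence $\delta\,\E[t \wedge T] \le \E[Y^{(0)}]$, followed by letting $t \to \infty$ — then gives $\E[T \mid X^{(0)} = s_0] \le \E[Y^{(0)}]/\delta = (1 + \ln(s_0/\smin))/\delta$.

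For the tail bound I would instead work with the killed process $\tilde X^{(t)} := X^{(t)}\,\II[T > t]$. On $\{T \le t\}$ both $\tilde X^{(t)}$ and $\tilde X^{(t+1)}$ vanish, while on $\{T > t\}$ we have $\tilde X^{(t)} = X^{(t)} \in S$ and $\tilde X^{(t+1)} = X^{(t+1)}$, so in all cases the multiplicative drift hypothesis yields $\E[\tilde X^{(t+1)} \mid \mathcal F_t] \le (1-\delta)\,\tilde X^{(t)}$. Iterating via the tower property gives $\E[\tilde X^{(t_0)} \mid X^{(0)} = s_0] \le (1-\delta)^{t_0} s_0 \le e^{-\delta t_0} s_0$. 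Since the event $\{T > t_0\}$ forces $\tilde X^{(t_0)} = X^{(t_0)} \ge \smin$, Markov's inequality gives $\Pr[T > t_0] \le e^{-\delta t_0} s_0/\smin$, and substituting $t_0 = \lceil (t + \ln(s_0/\smin))/\delta \rceil \ge (t + \ln(s_0/\smin))/\delta$ makes the right-hand side at most $e^{-t}$.

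The only genuinely non-mechanical step is guessing the logarithmic rescaling, and in particular the additive ``$+1$'' term: it is exactly what absorbs the single ``$\smin \to 0$'' jump (without it the drift of $Y$ would hold only in the interior and could break on the last step), and it is also what produces the sharp constant $1 + \ln(s_0/\smin)$ rather than a weaker $O(1/\delta)$-type additive loss that a crude summation $\E[T] = \sum_{t_0} \Pr[T > t_0]$ would give. Everything afterwards — the inequality $\ln z \le z - 1$, the supermartingale computation, the geometric iteration, the ceiling estimate — is routine; the only care needed is to pass through the killed process $\tilde X$ (respectively the stopped process $Y^{(t \wedge T)}$) so that the hypothesis, which conditions only on the value $X^{(t)} = s$, is applied legitimately even though $(X^{(t)})$ need be neither Markovian nor absorbed at $0$.
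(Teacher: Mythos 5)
Your proof is correct, but note that the paper does not prove Theorem~\ref{thm:multiplicative} at all: it is quoted as a known tool and attributed to the multiplicative drift paper of Doerr, Johannsen and Winzen~\cite{DoerrJW12}, so there is no in-paper argument to compare against. What you wrote is essentially the standard modern derivation from the literature, and it recovers exactly the stated constants: the logarithmic rescaling $Y^{(t)}=1+\ln(X^{(t)}/\smin)$ with the inequality $\ln z\le z-1$ turns the multiplicative hypothesis into additive drift $\delta$ (your case distinction for the jump to $0$ is precisely where the ``$+1$'' is needed, and it is handled correctly), giving $\E[T]\le(1+\ln(s_0/\smin))/\delta$; and the killed process $\tilde X^{(t)}=X^{(t)}\II[T>t]$ with $\E[\tilde X^{(t_0)}]\le(1-\delta)^{t_0}s_0\le e^{-\delta t_0}s_0$ plus Markov's inequality and $\tilde X^{(t_0)}\ge\smin$ on $\{T>t_0\}$ gives the tail bound after substituting $t_0=\lceil(t+\ln(s_0/\smin))/\delta\rceil$. (The check $(1-\delta)\le e^{-\delta}$ and the implicit fact $\delta\le 1$, forced by $X\ge 0$, are fine.)

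One small point deserves emphasis: both your supermartingale step for $Y^{(t\wedge T)}+\delta(t\wedge T)$ and your iteration $\E[\tilde X^{(t+1)}\mid\mathcal F_t]\le(1-\delta)\tilde X^{(t)}$ really use the drift condition conditioned on the history $\mathcal F_t$ (or a Markov assumption), not merely on the value $X^{(t)}=s$, because the event $\{T>t\}$ depends on the whole past. The theorem as stated conditions only on $X^{(t)}=s$, and the paper explicitly says it is being sloppy about such conditional expectations in Theorem~\ref{thm:multiplicative}; under the standard filtration-conditioned reading of the hypothesis (which is how the result is stated and proved in~\cite{DoerrJW12}) your argument is airtight, so this is a matter of stating the hypothesis precisely rather than a gap in your reasoning.
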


The next theorem combines tail bounds for positive additive drift and for negative drift~\cite{Oli-Wit:j:11:negativeDrift,oliveto2012erratum,rowe2014choice,lengler2016drift}. The formulation follows~\cite{lengler2016drift}.
\begin{theorem}[Tail Bounds and Negative Drift]\label{thm:tailbounds}
For all $a,b,\delta, \xi,\eta  >0$, with $a<b$, and every function $r=r(n) = o(n/ \log n)$ there is $\rho>0$, $n_0 \in \N$ such that the following holds for all $n \geq n_0$. Let $(X^{(t)})_{t \in\N_0}$  be a Markov chain over some finite state space $S \subseteq \R$. Suppose that for all $t \geq 0$ the following conditions hold:
\begin{enumerate} 
\item $\E[X^{(t)}-X^{(t+1)} \mid X^{(t)} =s ] \geq \delta \quad \text{ for all $s>an$}$,
\item $\Pr[|X^{(t)}-X^{(t+1)}| \geq j \mid X^{(t)} =s] \leq r(1+\xi)^{-j}$ for all $j \in \N_0$ and all $s \in S$.
\end{enumerate}
Let $T_a := \min\{t \geq 0 : X^{(t)} \leq an\}$ and $T_b := \min\{t \geq 0 : X^{(t)} \geq bn\}$. Then 
\begin{enumerate}
\item[(a)] $\Pr[T_a \geq \frac{(1+\eta)(b-a)n}{\delta} \mid X^{(0)} \leq bn] \leq  e^{-\rho n/r}$.
\item[(b)] $\Pr[T_b \leq e^{\rho n} \mid X^{(0)} \leq an] \leq e^{-\rho n/r}$.
\end{enumerate}
\end{theorem}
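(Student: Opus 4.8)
The statement to prove is actually a "Tools" theorem (Theorem~\ref{thm:tailbounds}): given a Markov chain with additive drift $\geq\delta$ above level $an$ and exponentially-decaying step sizes uniformly, (a) the chain hits level $an$ quickly, and (b) it does not escape above level $bn$ within $e^{\Omega(n)}$ steps. Let me sketch how I would prove each part.
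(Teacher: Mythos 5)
Your proposal stops exactly where a proof would need to begin: after restating what Theorem~\ref{thm:tailbounds} claims, you announce ``let me sketch how I would prove each part'' and no sketch follows. There is no argument for either (a) or (b) --- no identification of a potential/exponential moment function, no concentration inequality, no union bound over time steps, nothing that could be checked. As it stands this is a gap of the most basic kind: the entire proof is missing.

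For orientation: the paper itself does not prove this statement either; it is presented in the ``Tools'' section as a known result from the drift-analysis literature (Oliveto--Witt negative drift and its erratum, Rowe--Sudholt, Lengler--Steger), with the formulation taken from~\cite{lengler2016drift}. If you intend to supply a self-contained proof rather than a citation, the standard route is as follows. For (b) one considers the exponentially transformed process $e^{\kappa X^{(t)}}$ for a small $\kappa=\kappa(\delta,\xi,r)>0$: condition 2 gives a uniform bound on the exponential moments of the one-step change, and together with the drift condition 1 one shows that above $an$ the transformed process is a supermartingale (up to controlled error), so that crossing from $an$ to $bn$ requires an event of probability $e^{-\Omega(n/r)}$; a union bound over $e^{\rho n}$ steps with $\rho$ small enough finishes the argument. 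For (a) one applies a tail bound for additive drift (again via exponential moments of the centred increments guaranteed by condition 2, e.g.\ an Azuma/Hoeffding-type estimate for the stopped process) to show that after $(1+\eta)(b-a)n/\delta$ steps the accumulated drift exceeds the distance $(b-a)n$ except with probability $e^{-\Omega(n/r)}$; here one must also use (b)-type reasoning to rule out that the chain first escapes far above $bn$ and thereby lengthens the distance to be covered. None of these ingredients --- the choice of $\kappa$, the supermartingale verification, the handling of the boundary at $an$, the union bounds, and the dependence of $\rho$ on $\delta,\xi,\eta$ and on $r(n)=o(n/\log n)$ --- appears in your submission, so there is nothing yet to evaluate for correctness.
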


To understand part (a), note that we would typically assume $X^{(t)}$ to need time $T_a \approx (b-a)n/\delta$ to decrease from $bn$ to $an$ if the drift is at least $\delta$. Thus (a) states that it is exponentially unlikely to exceed this time by a factor $(1+\gamma)$. Part (b) states that it is exponential unlikely to climb from $an$ to $bn$ against a negative drift, even if we allow an exponential number of steps.

We will repeatedly use Chebyshev's sum inequality~\cite{hardy1988inequalities}, also know as rearrangement inequality:
\begin{theorem}[Chebyshev's sum inequality]
\label{thm:rearrangement}
Let $(a_i)_{i\in [n]}$, $(b_i)_{i\in [n]}$ be sequences in $\R$, and let $(c_i)_{i\in [n]}$ be a sequences in $\R_0^{+}$ with $\sum_{i=1}^{n}c _i >0$ and $\sum_{i=1}^{n}c _ib_i >0$. 
\begin{enumerate}[(i)]
\item If $(a_n)$ and $(b_n)$ are both non-decreasing, then
\begin{align}\label{eq:rearrangement1}
\frac{\sum_{i=1}^n c_ia_i}{\sum_{s=1}^{n} c_i} \leq \frac{\sum_{i=1}^n c_ia_ib_i}{\sum_{s=1}^{n} c_ib_i}.
\end{align}
\item If $(a_n)$ is non-decreasing and $(b_n)$ is non-increasing, then
\begin{align}\label{eq:rearrangement2}
\frac{\sum_{i=1}^n c_ia_i}{\sum_{s=1}^{n} c_i} \geq \frac{\sum_{i=1}^n c_ia_ib_i}{\sum_{s=1}^{n} c_ib_i}.
\end{align}
\end{enumerate}
The theorem also holds for infinite sequences if all sums converge.
\end{theorem}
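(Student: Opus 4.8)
The plan is to first establish the equivalent ``polynomial'' form
$\big(\sum_i c_i\big)\big(\sum_i c_i a_i b_i\big) \ge \big(\sum_i c_i a_i\big)\big(\sum_i c_i b_i\big)$ (resp.\ $\le$ in case (ii)) by the classical double-sum trick, and then divide through by the quantities $\sum_i c_i$ and $\sum_i c_i b_i$, which are strictly positive by hypothesis, to obtain the stated ratio form. Concretely, I would introduce the symmetric double sum
\[
S \;:=\; \sum_{i=1}^n \sum_{j=1}^n c_i c_j (a_i - a_j)(b_i - b_j),
\]
expand the product, and observe that $\sum_{i,j} c_i c_j a_i b_i = \sum_{i,j} c_i c_j a_j b_j = \big(\sum_i c_i\big)\big(\sum_i c_i a_i b_i\big)$ while $\sum_{i,j} c_i c_j a_i b_j = \sum_{i,j} c_i c_j a_j b_i = \big(\sum_i c_i a_i\big)\big(\sum_i c_i b_i\big)$, so that
\[
S \;=\; 2\Big(\textstyle\sum_i c_i\Big)\Big(\textstyle\sum_i c_i a_i b_i\Big) \;-\; 2\Big(\textstyle\sum_i c_i a_i\Big)\Big(\textstyle\sum_i c_i b_i\Big).
\]

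Next I would read off the sign of $S$ termwise. In case (i) both $(a_i)$ and $(b_i)$ are non-decreasing, so for every pair $i,j$ the factors $a_i - a_j$ and $b_i - b_j$ have the same sign (one may be zero), and since $c_i c_j \ge 0$ every summand is $\ge 0$; hence $S \ge 0$, which gives the polynomial inequality, and dividing both sides by the positive number $\big(\sum_i c_i\big)\big(\sum_i c_i b_i\big)$ yields \eqref{eq:rearrangement1}. In case (ii) $(b_i)$ is non-increasing, so $a_i - a_j$ and $b_i - b_j$ have opposite signs, every summand is $\le 0$, hence $S \le 0$; the same division (still legitimate since $\sum_i c_i b_i > 0$ by hypothesis) reverses the inequality and yields \eqref{eq:rearrangement2}.

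Finally, for the infinite-sequence version I would truncate at index $N$, apply the finite inequality to the partial sums, and let $N \to \infty$; under the standing assumption that all relevant series converge, each partial sum tends to its limit, the denominators are eventually positive, and the inequality passes to the limit. I expect no real obstacle here: the only points needing care are the bookkeeping of which cross terms collapse to which products when expanding $S$, and noting that coincidences among the $a_i$ (or $b_i$) simply make the corresponding factor zero and are therefore harmless. This is the standard textbook argument for Chebyshev's sum inequality; cf.~\cite{hardy1988inequalities}.
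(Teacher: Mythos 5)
Your argument is correct: the double-sum identity, the termwise sign analysis in both cases, and the division by the strictly positive quantities $\sum_i c_i$ and $\sum_i c_ib_i$ give exactly \eqref{eq:rearrangement1} and \eqref{eq:rearrangement2}, and the truncation-and-limit step handles the infinite case under the stated convergence assumption. The paper itself does not prove this theorem at all -- it is quoted as a known tool with a citation to Hardy--Littlewood--P\'olya -- and your write-up is precisely the classical weighted Chebyshev argument behind that citation, so it serves as a valid self-contained proof.
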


\section{Upper Bounds for General Monotone Functions}\label{sec:easystrong}

In this section, we will give a generic proof for strong dichotomies, i.e., for showing that under certain circumstances an algorithm will optimise every monotone function in time $O(n\log n)$. The proof follows loosely the proofs given in~\cite{doerr2013mutation} and~\cite{lengler2016drift}. 

\begin{theorem}[Generic Easyness Proof]\label{thm:easy}
Consider an elitist algorithm $\mathcal{A}$ with population size one that in each round generates an offspring by an arbitrary method, and replaces the parent if and only if the offspring has at least the same fitness. Let $s_{01}$ denote the number of zero-bits in the parent that are one-bits in the offspring, and vice versa for $s_{10}$. Assume that there is a constant $\delta>0$ such that for all $x \in \{0,1\}^n$,
\begin{align}\label{eq:easy1}
\E[s_{10} \mid \text{parent} =x \text{ and } s_{01}>0] \leq 1-\delta,
\end{align}
and
\begin{align}\label{eq:easy2}
\Pr[s_{01} > 0 \mid \text{parent} =x] = \Omega(\tfrac{1}{n}(n-\OM(x))).
\end{align}
Then with high probability $\mathcal{A}$ finds the optimum of every strictly monotone functions in $O(n\log n)$ rounds.
\end{theorem}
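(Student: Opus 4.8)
The plan is to run a multiplicative‑drift argument on the number of zero‑bits. Take as potential $\Phi(x):=n-\OM(x)$ and write $X^{(t)}:=\Phi(x^{(t)})$ for the potential of the single individual after $t$ rounds. I would aim to show that there is a constant $\delta'>0$ with
\[
  \E\bigl[X^{(t)}-X^{(t+1)} \mid x^{(t)}=x\bigr] \;\ge\; \frac{\delta'}{n}\,\Phi(x)\qquad\text{for every }x\in\{0,1\}^n .
\]
Averaging over all $x$ with $\Phi(x)=s$ then verifies the hypothesis of Theorem~\ref{thm:multiplicative} with state space $S=\{1,\ldots,n\}$, $\smin=1$ and $\delta=\delta'/n$. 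Since $X^{(0)}\le n$, and $X^{(t)}=0$ holds precisely once the current individual equals the (unique) optimum $(1\ldots1)$ of $f$, the drift theorem gives $\E[T]=O(n\log n)$ and, via the tail bound in the same theorem, $T=O(n\log n)$ with high probability, where $T$ is the runtime in rounds.

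For the drift bound I would fix the parent $x$, write $s_{01},s_{10}$ as in the statement and let $A$ denote the event ``offspring accepted''; then $X^{(t)}-X^{(t+1)}=(s_{01}-s_{10})\II[A]$. Beyond \eqref{eq:easy1}--\eqref{eq:easy2} the only ingredient is strict monotonicity of $f$, used in three ways: (i) if $s_{01}=0$ the offspring is $\le x$ coordinatewise, hence either equals $x$ or is strictly worse, so $\Phi$ does not change; (ii) if $s_{01}\ge 1$ and $s_{10}=0$ the offspring strictly dominates $x$ and is accepted; (iii) on $A\cap\{s_{10}\ge 1\}$ necessarily $s_{01}\ge1$, for otherwise (i) would force rejection. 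Splitting on whether $s_{10}=0$, items (i)--(ii) identify the $\{s_{10}=0\}$-contribution to the drift as $\E[s_{01}\II[s_{10}=0]]\ge\Pr[s_{01}\ge1,\,s_{10}=0]$, while on $\{s_{10}\ge1\}$ one has $(s_{01}-s_{10})\II[A]\ge -(s_{10}-s_{01})^{+}\II[s_{01}\ge1]$ by (iii). Hence
\[
  \E\bigl[X^{(t)}-X^{(t+1)}\mid x^{(t)}=x\bigr]\;\ge\;\Pr[s_{01}\ge1,\,s_{10}=0]\;-\;\E\bigl[(s_{10}-s_{01})^{+}\II[s_{01}\ge1]\bigr].
\]
On $\{s_{01}\ge1\}$ we have $(s_{10}-s_{01})^{+}\le(s_{10}-1)^{+}=s_{10}-\II[s_{10}\ge1]$, so the subtracted expectation is at most $\E[s_{10}\II[s_{01}\ge1]]-\Pr[s_{10}\ge1,\,s_{01}\ge1]$; feeding the $\Pr[s_{10}\ge1,\,s_{01}\ge1]$ term back into the first probability collapses the right‑hand side to $\Pr[s_{01}\ge1]-\E[s_{10}\II[s_{01}\ge1]]$. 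Now \eqref{eq:easy1} gives $\E[s_{10}\II[s_{01}\ge1]]\le(1-\delta)\Pr[s_{01}\ge1]$, so the drift is at least $\delta\Pr[s_{01}\ge1]$, and \eqref{eq:easy2} turns this into $\delta\cdot\Omega(\Phi(x)/n)$, which is the required bound.

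The step I expect to be the real obstacle is precisely this last accounting. The naive route---bound the gain from dominating offspring below by $\delta\Pr[s_{01}\ge1]$ and the loss from accepted incomparable offspring above by $(1-\delta)\Pr[s_{01}\ge1]$---only yields a positive drift when $\delta>\tfrac12$, which for the \ooea would give the suboptimal threshold $c<\tfrac12$ instead of the correct $c<1$. The fix is to keep the truncation $(s_{10}-s_{01})^{+}$ intact rather than discarding the $s_{01}$, and to use $(s_{10}-1)^{+}=s_{10}-\II[s_{10}\ge1]$: an accepted incomparable offspring already carries at least one flipped zero‑bit, which ``pays for'' one of its flipped one‑bits, and that is exactly the factor the crude estimate throws away. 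I would also stress that the argument never uses independence of the bit flips or any structure of the offspring mechanism beyond \eqref{eq:easy1}--\eqref{eq:easy2} conditioned on the parent, which is what lets the lemma apply verbatim to standard bit mutation, heavy‑tailed mutation, and the biased crossover children of the \ollga.
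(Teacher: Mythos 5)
Your proposal is correct and follows essentially the same route as the paper: the potential is the number of zero-bits, monotonicity yields the case analysis (no change when $s_{01}=0$, decrease by at least $1-s_{10}$ whenever $s_{01}>0$, whether the offspring is accepted or not), so the drift is at least $\delta\Pr[s_{01}>0]=\Omega(\delta(n-\OM(x))/n)$ by \eqref{eq:easy1}--\eqref{eq:easy2}, and the multiplicative drift theorem finishes the argument. Your bookkeeping via $(s_{10}-s_{01})^{+}\le s_{10}-\II[s_{10}\ge 1]$ is just a slightly more elaborate way of writing the paper's uniform bound $X_{t+1}\le X_t-1+s_{10}$ on the event $s_{01}>0$, and it collapses to the identical final estimate.
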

Before we prove the theorem, we remark that the \olea, the \fooea, and the \ollga all fit the generic description in Theorem~\ref{thm:easy}, modulo Condition~\eqref{eq:easy1}. For the \ollga, note that the procedure to generate the offspring is rather complicated, and involves several intermediate mutation and crossover steps. Nevertheless, the procedure ultimately produces a single offspring (the fittest of the crossover offsprings) which competes with the parent.

\begin{proof}[Proof of Theorem~\ref{thm:easy}]
Let $X_{t} := n-\OM(x^{(t)})$, where $x^{(t)}$ is the $t$-th search point of $\mathcal{A}$, and let $y$ be the offspring of $x^{(t)}$. First note that if $s_{01} =0$ and $x \neq y$, then by monotonicity $f(x) > f(y)$. Therefore, $x^{(t+1)} = x^{(t)}$ and $X_{t+1} = X_t$ if $s_{01}=0$. (This also holds in the trivial case $s_{01} =0$ and $x = y$). 

If $s_{01}>0$ and $s_{10} = 0$, then again by monotonicity $f(y) > f(x)$. Thus $x^{(t+1)} = y^{(t)}$ and $X_{t+1} \leq X_t -1 = X_t-1+s_{10}$.

Finally, if $s_{01}>0$ and $s_{10} > 0$ then we have two cases. Either $y^{(t)}$ is accepted, in which case $X_{t+1} = X_t -s_{01}+s_{10}\leq X_t -1 +s_{10}$. Or $y^{(t)}$ is rejected, in which case the same inequality follows from $X_{t+1} = X_t \leq X_t -1 +s_{10}$.

Summarising, we see that $X_t$ does not change for $s_{01} =0$, and that for $s_{01} >0$ we have in all cases $X_{t+1} \leq X_t -1 +s_{10}$. Therefore, $X_t$ has a drift of at least
\begin{align*}
\E[X_{t}-X_{t+1} \mid x^{(t)}] & \geq \Pr[s_{01} >0] \cdot \E[1-s_{10} \mid s_{01} >0, x^{(t)}] \\
& \stackrel{\eqref{eq:easy1},\eqref{eq:easy2}}{=} \Omega(\delta /n \cdot X_t).
\end{align*}
The claim thus follows from the multiplicative drift theorem.
\end{proof}

From Theorem~\ref{thm:easy} it will follow that the \olea with $c<1$, the \fooea with $m_2/m_1<1$, and the \ollga with $c\gamma <1$ have runtime $O(n\log n)$, since we will show that these settings satisfy~\eqref{eq:easy1}. For the \ollga with $c\gamma <1$ and non-constant parameters we cannot apply Theorem~\ref{thm:easy} directly. However, we will see that the conditional expectation in~\eqref{eq:easy1} is still the crucial object to study. 
\begin{theorem}\label{thm:easymain}
Let $\delta >0$. The following algorithms need with high probability $O(n\log n)$ generations on any strictly monotone function.
\begin{itemize}
\item The \olea with $c\leq 1-\delta$, $c=\Omega(1)$ and $\lambda = O(1)$;
\item the \fooea with $m_2/m_1 \leq 1-\delta$ and $m_1 = \Omega(1)$;
\item the \ollga with $c \gamma \leq 1-\delta$ and $c\gamma = \Omega(1)$.
\end{itemize}
Moreover, if the \ollga with $c\gamma <1-\delta$ uses the optimal static or adaptive parameter choice from~\cite{doerr2017optimal}\footnote{In fact, the suggested parameter choice in~\cite{doerr2015black,doerr2017optimal} satisfies $c\gamma =1$ instead of $c\gamma <1$. However, the runtime analysis in~\cite{doerr2015black} only changes by constant factors if $\gamma$ is decreased by a constant factor. Thus Theorem~\ref{thm:easymain} applies to the parameter choices from~\cite{doerr2015black,doerr2017optimal}, except that $\gamma$ is decreased by a constant factor.}, then with high probability the runtime on \hottopic is up to a factor $\Theta(1)$ the same as the runtime for \onemax.
\end{theorem}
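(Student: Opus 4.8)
The three bullet points will follow from Theorem~\ref{thm:easy}, so the plan is to verify Conditions~\eqref{eq:easy1} and~\eqref{eq:easy2} for the \olea, the \fooea and the \ollga. Condition~\eqref{eq:easy2} is routine: a standard bit mutation flips a fixed zero-bit with probability $c/n$, so $\Pr[s_{01}>0] \ge 1-(1-c/n)^{n-\OM(x)} = \Omega(\tfrac1n(n-\OM(x)))$ because $c=\Omega(1)$, and a second-moment estimate gives the analogous bound for the \fooea (using $m_1=\Omega(1)$ and $m_2/m_1<2$) and for the \ollga (using $c,\gamma=\Omega(1)$). The substance is~\eqref{eq:easy1}. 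For the \olea with $\lambda=1$ the bits are flipped independently, so $s_{01}$ and $s_{10}$ are independent and $\E[s_{10}\mid s_{01}>0]=\E[s_{10}]=c\,\OM(x)/n\le c\le 1-\delta$. For the \fooea I would condition on the number $s$ of flipped positions: conditioning on $s_{01}>0$ essentially pins one of the $s$ positions onto one of the few zero-bits, leaving the remaining $s-1$ positions almost uniform among the one-bits, whence $\E[s_{10}\mid s_{01}>0]\approx \E[s(s-1)]/\E[s]=m_2/m_1\le 1-\delta$; that this conditional expectation is largest when there are few zero-bits (so that the approximation is tight) is checked with Theorem~\ref{thm:rearrangement}. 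For the \ollga, the individual competing with the parent is the best of $\lambda$ crossover offspring, each of which keeps every one-bit-to-zero flip of the selected mutation independently with probability $\gamma$; this contracts the expected number of such flips by a factor $\gamma$, so combining with the mutation bound gives $\E[s_{10}\mid s_{01}>0]\lesssim c\gamma\le 1-\delta$.

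Two points need extra care. First, when $\lambda>1$ the ``offspring'' handed to Theorem~\ref{thm:easy} is the \emph{fittest} of the $\lambda$ candidates, so one must show that fitness-based selection does not inflate $\E[s_{10}\mid s_{01}>0]$; I would condition on the zero-bit flips of the selected offspring and use monotonicity (among offspring dominating the parent on a given set of bits, the fitter ones carry stochastically fewer one-bit losses), so that selection only helps. Second, for the \ollga with non-constant $c,\gamma,\lambda$ the ``fixed generation method'' hypothesis of Theorem~\ref{thm:easy} no longer applies literally, but the proof of that theorem is just a multiplicative-drift computation for $X_t=n-\OM(x^{(t)})$; the bounds above still give $\E[X_t-X_{t+1}\mid x^{(t)}]=\Omega(\delta X_t/n)$, so Theorem~\ref{thm:multiplicative} again yields $O(n\log n)$ generations.

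For the ``Moreover'' statement the plan is to show that on \hottopic the \ollga run with the optimal (static or adaptive) \onemax parameters has, in every state, a drift on $Z:=n-\OM(x)$ — more precisely a per-generation progress profile — that agrees with its behaviour on \onemax at the same $Z$ up to a constant factor, and then to invoke the runtime argument of~\cite{doerr2015black,doerr2017optimal} verbatim. Two facts underlie the comparison. (i) While the algorithm sits at level $\ell$, the fitness is dominated by the bits of the current ``hot topic'' $A_{\ell+1}$, a block of $\alpha n=\Theta(n)$ bits which — since the search point is essentially independent of $A_{\ell+1}$ at the moment the level first reaches $\ell$, by a hypergeometric concentration argument — contains $\Theta(Z)$ zero-bits; hence the \ollga makes its gains at the same rate as on \onemax restricted to a $\Theta(n)$-bit block, with the same gear-shift threshold $Z\asymp n/\lambda^2$ (up to a factor depending only on $\alpha$) and the same progress rate. (ii) By the extension of~\eqref{eq:easy1} above, the expected number of one-bit losses surviving into an accepted step is at most $c\gamma\le 1-\delta$, strictly below the $\ge 1$ gain of such a step, so the net drift on $Z$ is a positive constant fraction of the gain and is never positive. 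The level bookkeeping is handled by a union bound over the $e^{\rho n}$ sets $B_\ell$ (legitimate since $\rho$ is chosen small enough relative to $\beta$ and $\eps$): while $Z\ge \eps n$ some hot-topic zero-bit can always be fixed, so the algorithm never stalls, and once $Z$ drops below $(1-o(1))\eps n$ every $B_\ell$ has at most $\eps\beta n$ zero-bits, so the level equals $e^{\rho n}$, the hot topic is empty, and the function is \emph{exactly} \onemax — the remaining $O(\eps n)$ zero-bits are then cleared at a cost that is $\Theta(1)$ times the \onemax runtime. For the adaptive variant one additionally checks that the one-fifth self-adjusting rule stabilises $\lambda$ at the same order $\Theta(\sqrt{n/Z})$ on \hottopic as on \onemax, because the per-generation success probability has the same dependence on $\lambda$ up to constants; the analysis of~\cite{doerr2017optimal} then transfers.

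The main obstacle is fact (i) together with the uniform control of $\E[s_{10}\mid\text{accept}]$: one must show that the finely tuned \ollga dynamics that produce the $n\sqrt{\log n}$ (resp.\ $O(n)$) runtime on \onemax survive being confined to a random $\Theta(n)$-block that is \emph{reset at every level change}, while the out-of-focus bits leak zero-bits at a rate that is positive but bounded by $c\gamma<1$. Quantifying this leakage sharply enough — in particular showing that conditioning on ``the step is accepted by \hottopic'' (rather than merely on $s_{01}>0$) still keeps the expected one-bit loss below $1-\delta$, uniformly over the level structure and over all $Z$ — is the delicate step; the level bookkeeping and the transfer of the \onemax drift argument should then be comparatively routine.
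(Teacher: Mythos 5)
For the three bullet items your plan is essentially the paper's: verify Conditions~\eqref{eq:easy1} and~\eqref{eq:easy2} of Theorem~\ref{thm:easy}, use monotonicity to argue that fitness-based selection cannot inflate the expected number of lost one-bits, and use Chebyshev's sum inequality (Theorem~\ref{thm:rearrangement}) to control the reweighting of $s$ caused by conditioning on $s_{01}>0$ for the \fooea; your remark that Theorem~\ref{thm:easy} is at bottom a multiplicative-drift computation, so that non-constant $c,\gamma,\lambda$ are still covered, is also exactly how the paper treats that case. Be aware, though, that your one-line treatment of the \ollga (``crossover contracts the one-bit losses by a factor $\gamma$'') hides the main technical content of that bullet: conditioning on the crossover child gaining a zero-bit ($t_{01}>0$) biases the selected mutant towards larger $s_{01}^{(k)}$, which through the selection among the $\lambda$ mutants is correlated with $s_{10}^{(k)}$; the paper needs a stochastic-domination argument for both selection steps plus a summation-by-parts computation to get $\E[t_{10}\mid t_{01}>0]\le c\gamma$. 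Your sketch points in the right direction, but that is where the real work sits.

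The ``Moreover'' statement is where I see a genuine gap. First, a concrete error: you claim that once $Z=n-\OM(x)$ drops below $(1-o(1))\eps n$, every $B_\ell$ has at most $\eps\beta n$ zero-bits and the level is maximal. This is false: $|B_\ell|=\beta n$ with $\beta\ll\eps$, so with about $\eps n$ zero-bits in total a single $B_\ell$ can still contain up to $\beta n>\eps\beta n$ zero-bits. The correct (and then deterministic) threshold is $Z\le\eps\beta n$. Second, and more importantly, the core of your plan for the phase before that threshold --- showing that the \ollga's state-by-state progress on \hottopic matches its \onemax progress on the random $\alpha n$-block $A_{\ell+1}$, with the same gear-shift threshold, reset at every level change --- is both unresolved (you flag it yourself as the main obstacle) and of doubtful feasibility, since in that phase selection among mutants and crossover children is driven by the weighted \hottopic fitness rather than by \onemax on the block, and for the static optimal parameters $\lambda$ is a fixed $\Theta(\sqrt{\log n\,\cdot\,})$ value so there is no $Z\asymp n/\lambda^2$ regime change to reproduce. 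It is also unnecessary: the paper's argument is much cruder. Using the drift bound already established ($\E[s_{10}\mid s_{01}>0]\le c\gamma\le 1-\delta$) and the additive/negative drift theorem, the number of zero-bits falls below $\eps\beta n$ within $O(n)$ generations, i.e.\ $O(\lambda n)$ evaluations; this is at most a constant times the \onemax runtime because in the static setting the \onemax runtime is itself $\Theta(\lambda n)$, and in the adaptive setting $\lambda\le\sqrt{1/(\eps\beta)}=O(1)$ while $Z\ge\eps\beta n$. From then on every $B_\ell$ has at most $\eps\beta n$ zero-bits, the level is maximal, and \hottopic agrees with \onemax up to an additive constant on the region visited; one only has to add that whp the process never returns above $\eps\beta n$ zero-bits (negative drift, plus $\lambda=O(\sqrt n)$ so a single generation cannot jump out of the region), so the remaining optimisation time coincides with that on \onemax. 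You should replace the comparison-of-dynamics plan by this two-phase argument; as it stands, the delicate step you identify is a genuine gap, not a routine verification.
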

We remark that the optimal runtime of the \ollga on \onemax is $O(n \sqrt{\log(n)\log\log\log(n)/\log\log n})$ for static parameters, and $O(n)$ for adaptive parameter choices~\cite{doerr2015black,doerr2017optimal}.\begin{proof}[Proof of Theorem~\ref{thm:easymain}]

First consider the \olea. 
Assume that the current search point is $x$. We create the $\lambda$ offsprings by two consecutive steps. For each $j \in [\lambda]$, first we flip every zero-bit in $x$ independently with probability $c/n$, and call the result $z^{(j)}$. Then for every one-bit in $x$, we flip the corresponding bit in $z^{(j)}$ independently with probability $c/n$, and call the result $y^{(j)}$. Thus $y^{(j)}$ follows exactly the right distribution: each bit has been flipped independently with probability $c/n$. 
Let $k \in [\lambda]$ be the random variable that denotes the index of the fittest of the $y^{(j)}$ (where we break ties randomly). Moreover, fix some index $i \in [n]$ for which $x_i =1$. 

Note that for every fixed $j$, we have $\Pr[y^{(j)}_i = 0] = c/n$. Intuitively, we need to show that the probability does not increase by the selection process. For all $j\in [\lambda]$, let $p_j := \Pr[k=j \mid z^{(1)},\ldots,z^{(\lambda)}]$. By monotonicity, if a search point $y^{(j)}$ with $y^{(j)}_i =0$ is the fittest of the offspring, then replacing $y^{(j)}_i =0$ by $y^{(j)}_i =1$ can only increase the fitness.  Therefore, conditioning on $y^{(j)}_i =0$ can only decrease the probability that $k=j$, in formula 
\begin{align}\label{eq:easyproof1}
\Pr[k=j \text{ and } y^{(j)}_i =0 \mid z^{(1)},\ldots,z^{(\lambda)}] \leq p_j \cdot \Pr[y^{(j)}_i =0],
\end{align}
where we note that the latter probability is independent of $z^{(1)},\ldots,z^{(\lambda)}$. Let $s^{(j)}_{01}$ be the number of zero-bits in which $x$ and $z^{(j)}$ differ, i.e., the number of zero-bits that have been flipped into one-bits. Let $J := \{j\in[\lambda] \mid s^{(j)}_{01} > 0\}$. Then by~\eqref{eq:easyproof1},
\begin{align*}
\Pr[y^{(k)}_i = 0 \mid  s^{(k)}_{01} > 0; z^{(1)},\ldots,z^{(\lambda)}] & \leq \frac{\sum_{j\in J} p_j \cdot \Pr[y^{(j)}_i =0]}{\sum_{j\in J} p_j} \\ 
&\leq c/n.
\end{align*}
Summing over all $i \in [n]$ with $x_i=1$, and averaging over all possible values of $z^{(1)},\ldots,z^{(\lambda)}$, we obtain
\begin{align}\label{eq:easyproof1b}
\E[|\{i\in [n] : x_i =1, y^{(k)}_i = 0\}| \mid s^{(k)}_{01} > 0] \leq c \leq 1-\delta.
\end{align}
Thus Condition~\eqref{eq:easy1} in Theorem~\ref{thm:easy} is satisfied. Note that so far we have not used $c = \Omega(1)$ and $\lambda =O(1)$. We only need these assumptions to verify Condition~\eqref{eq:easy2}, which indeed follows immediately. So the statement for the \olea follows from Theorem~\ref{thm:easy}. \medskip

Next we turn to the \fooea. As before, let $x$ be the current search point, and let $y$ be the offspring. For all $s> 0$, let $p_s$ be the probability to flip exactly $s$ bits. 
Then we have 
\begin{align}\label{eq:easyproof2b}
2\cdot\Pr[1 \leq s \leq 2] & \geq m_1 - \sum_{s=3}^{\infty} p_s s  \geq m_1 - \frac{1}{2}\sum_{s=3}^{\infty} p_s s(s-1) \nonumber \\
& \geq m_1 -\frac{m_2}{2} \geq \frac{m_1}{2},
\end{align}
so $\Pr[1 \leq s \leq 2] \geq m_1/4$. In particular,this expression is in $\Omega(1)$, which implies Condition~\eqref{eq:easy2} in Theorem~\ref{thm:easy}. Note for later reference that~\eqref{eq:easyproof2b} also implies $m_1 \leq 4$. 

It remains to check~\eqref{eq:easy1}. For this, let $s_{01}$ and $s_{10}$ denote the number of bit flips from $0$ to $1$ and from from $1$ to $0$ respectively. We observe that
\begin{align}
\E[s_{10} \mid s_{01}>0] & = \frac{\sum_{s\geq 1} p_s \Pr[s_{01} >0 \mid s] \cdot \E[s_{10} \mid s; s_{01}>0]}{\sum_{s\geq 1} p_s \Pr[s_{01} >0 \mid s]} \nonumber \\
& \leq \frac{\sum_{s\geq 1} p_s \Pr[s_{01} >0 \mid s] \cdot (s-1)}{\sum_{s\geq 1} p_s \Pr[s_{01} >0 \mid s]}. \label{eq:easyproof2}
\end{align}
To estimate the term above, we note that since the term $s-1$ is increasing, for every non-decreasing sequence $\alpha_s$, by Chebyshev's sum inequality we may bound
\begin{align}\label{eq:easyproof3}
\eqref{eq:easyproof2} \leq \frac{\sum_{s\geq 1} p_s \Pr[s_{01} >0 \mid s] \cdot \alpha_s \cdot (s-1)}{\sum_{s\geq 1} p_s \Pr[s_{01} >0 \mid s]\cdot \alpha_s}.
\end{align}
We will use $\alpha_s := s/\Pr[s_{01} >0 \mid s]$, so we need to show that $\alpha_s^{-1} = \Pr[s_{01} >0 \mid s]/s$ is a non-increasing sequence. We regard the process where we draw the $s$ bit positions one after another, and consider for the $i$-th round the probability $q_i$ that a zero bit is drawn for the first time in this round. This probability is decreasing, and thus 
\begin{align*}
\frac1s \Pr[s_{01} >0 \mid s]  = \frac1s \sum_{i=1}^s q_i &\leq \frac{1}{s-1} \sum_{i=1}^{s-1} q_i \\ 
&= \frac{1}{s-1} \Pr[s_{01} >0 \mid s-1],
\end{align*}
as desired. Plugging $\alpha_s$ into~\eqref{eq:easyproof3} yields
\begin{align}\label{eq:easyproof3b}
\E[s_{10} \mid s_{01}>0] \leq \frac{m_2}{m_1} \leq 1-\delta,
\end{align}
so Condition~\eqref{eq:easy1} in Theorem~\ref{thm:easy} is satisfied, and the statement follows from Theorem~\ref{thm:easy}. 
For later reference, we note that the first inequality in~\eqref{eq:easyproof3b} holds for any distribution $\mathcal D$, regardless whether $\frac{m_2}{m_1} \leq 1-\delta$.
\medskip

Finally let us turn to the \ollga. 
Let $x$ be the current search point. In the first step an integer $s \sim \Bin(n,c/n)$ is chosen, and $\lambda$ offspring $y^{(1)},\ldots,y^{(\lambda)}$ are created from $x$ by flipping exactly $s$ bits. As for the \olea, let $s_{01}^{(j)}$ and $s_{10}^{(j)} = s-s_{01}^{(j)}$ be the number of zero-bits and one-bits that were flipped in the creation of $y^{(j)}$, respectively, and let $k \in [\lambda]$ be the fittest among the $y^{(j)}$, breaking ties randomly. Note that for a fixed $j$, the offspring $y^{(j)}$ has the same distribution as for the \olea. (The difference is that the offspring are not independent.) Therefore for every fixed $j\in [\lambda]$ and all $r,r'\in \N$,
\begin{align}\label{eq:easyproof4}
\Pr[s_{10}^{(j)} \geq r \mid s_{01}^{(j)} \geq r'] = \Pr[\Bin(\OM(x),c/n) \geq r].
\end{align}
In particular, $\E[s_{10}^{(j)} \geq r \mid s_{01}^{(j)} \geq r'] \leq c$.

Now we show that $\E[s_{10}^{(j)} \mid s_{01}^{(j)} \geq r']$ can only increase by the selection process. Fix any values of $s, s_{01}^{(1)},\ldots,s_{01}^{(\lambda)}$, and let $p_j := \Pr[k=j \mid s,s_{01}^{(1)},\ldots,s_{01}^{(\lambda)}]$. Thus we condition on the number of zero-bits and one-bits that we flip, but not on their location. Note that we can create a random offspring with $s_{01}^{(j)} = \sigma$ (i.e., with $\sigma$ flips of one-bits and $s-\sigma$ flips of zero-bits of $x$) by starting with a random search point with $s_{01}^{(j)} = \sigma-1$, reverting a random flip of a zero-bit, and adding a random flip of a one-bit of $x$. Since this operation strictly increases the fitness, it can only increase $p_j$. Hence $p_j$ is an increasing function in $s_{01}^{(j)}$. On the other hand, the indicator random variable $\II[s_{01}^{(j)}\leq s-r]$ is trivially decreasing in $s_{01}^{(j)}$. Therefore, by Chebyshev's sum inequality, and using $J:= \{j\in \lambda \mid s_{01}^{(j)}\geq r'\}$ we have for any $r\in \N$,
\begin{align*}
\Pr&[s_{10}^{(k)}\geq r \mid s,s_{01}^{(1)},\ldots,s_{01}^{(\lambda)} \text{ and } s_{01}^{(k)}\geq r']  \\
&  = \frac{\sum_{j\in J} p_j \cdot \II[s_{10}^{(j)}\geq r]}{\sum_{j\in J}p_j} = \frac{\sum_{j\in J} p_j \cdot \II[s_{01}^{(j)} \leq s-r]}{\sum_{j\in J}p_j} \\
& \leq \frac{1}{|J|} \sum_{j\in J}\II[s_{01}^{(j)}\leq s-r] = \frac{1}{|J|} \sum_{j\in J}\II[s_{10}^{(j)} \geq r].
\end{align*}
Note that the latter term just counts which fraction of those $j$ with $s_{01}^{(j)} \geq r'$ also satisfy $s_{10}^{(j)}\geq r$. This is directly related to the definition of conditional probability. In particular, averaging over all possible values of $s,s_{01}^{(1)},\ldots,s_{01}^{(\lambda)}$, we get for every fixed $j\in [\lambda]$, 
\begin{align}\label{eq:easyproof5}
\Pr[s_{10}^{(k)} \geq r \mid s_{01}^{(k)}\geq r']  \leq \Pr[s_{10}^{(j)}\geq r \mid s_{01}^{(j)} \geq r'].
\end{align}
In other words, $s_{10}^{(k)}$ is stochastically dominated by $s_{10}^{(j)}$ if we condition on $s_{01}\geq r'$. Recall that the latter one is a binomial distribution by~\eqref{eq:easyproof4}, and in particular $\E[s_{10}^{(k)} \mid s_{01}^{(k)}\geq r'] \leq c$. By an analogous argument, the selection process can only increase how many zero-bits of $x$ are flipped, i.e., $s_{01}^{(k)}$ stochastically dominates $s_{01}^{(j)}$ if we condition on $s_{01}\geq r'$.

In the second step of the \ollga, the algorithm produces $\lambda$ biased crossovers $z^{(1)},\ldots,z^{(\lambda)}$ between $x$ and $y^{(k)}$, choosing the bits from $y^{(k)}$ with probability $\gamma$. Then it compares the fittest crossover offspring $z^{(\ell)}$ with $x$. Similarly as before, we let $t_{01}^{(j)}$ be the number of bits that are zero in $x$ and one in $z^{(j)}$, and vice versa for $t_{10}^{(j)}$. To estimate $\E[t_{10}^{(j)} \mid t_{01}^{(j)}>0]$, we define the following three terms:
\begin{align*}
A_\sigma & := \Pr[t_{01}^{(j)} >0 \mid s_{01}^{(k)} = \sigma]/\Pr[t_{01}^{(j)} >0];\\
B_\sigma & := \Pr[s_{01}^{(k)} =\sigma];\\
C_\sigma & := \E[t_{10}^{(j)} \mid t_{01}^{(j)} >0 \text{ and }s_{01}^{(k)} = \sigma].
\end{align*}
We observe that $A_\sigma$ is increasing in $\sigma$ with $A_0=0$, and that $\sum_{\sigma =0}^{\infty} B_\sigma = \sum_{\sigma =0}^{\infty} A_\sigma B_\sigma =1$. Moreover, observe that conditioned on $s_{01}^{(k)} = \sigma$, the term $t_{10}^{(j)}$ is independent of the event $t_{01}^{(j)}>0$, since the crossover treats bits independently. Thus we may equivalently write $C_\sigma = \E[t_{10}^{(j)} \mid s_{01}^{(k)} = \sigma]$. Consequently, for every $\sigma_0\geq 0$, using~\eqref{eq:easyproof5} in the last step,
\begin{align*}
\sum_{\sigma' =\sigma}^{\infty}B_{\sigma'} C_{\sigma'} & = \Pr[s_{01}^{(k)} \geq \sigma]\E[t_{10}^{(j)} \mid s_{01}^{(k)} \geq \sigma] \\
& = \Pr[s_{01}^{(k)} \geq \sigma] \cdot \gamma \E[s_{10}^{(k)} \mid s_{01}^{(k)} \geq \sigma] \\
& \leq c\gamma \Pr[s_{01}^{(k)} \geq \sigma] = c\gamma \sum_{\sigma' =\sigma}^{\infty}B_{\sigma'}.
\end{align*}
Using summation by parts (discrete partial integration) on the two functions $g_1(\sigma) = A_\sigma$ and $g_2(\sigma) = \sum_{\sigma' = \sigma}^{\infty}B_{\gamma'}C_{\gamma'}$ (and backwards for $g_2'(\sigma) = \sum_{\sigma' = \sigma}^{\infty}B_{\gamma'}$), we thus may conclude that
\begin{align*}
\sum_{\sigma' =\sigma}^{\infty}A_{\sigma'} B_{\sigma'} C_{\sigma'} & = \sum_{\sigma'=0}^{\infty}\underbrace{(A_{\sigma'}-A_{\sigma'-1})}_{\geq 0}\underbrace{\sum_{r=\sigma'}^{\infty}B_rC_r}_{\leq c\gamma \sum B_{r}} \\
& \leq c\gamma \sum_{\sigma'=0}^{\infty}(A_{\sigma'}-A_{\sigma'-1})\sum_{r=\sigma'}^{\infty}B_r \\
& = c \gamma \sum_{\sigma' =\sigma}^{\infty}A_{\sigma'} B_{\sigma'} = c\gamma
\end{align*}
Indeed we have computed a term of interest:
\begin{align*}
\E[t_{10}^{(j)} \mid t_{01}^{(j)}>0] & = \sum_{\sigma =0}^{\infty}\Pr[s_{01}^{(k)} = \sigma \mid t_{01}^{(j)}>0] \cdot C_{\sigma} \\
& = \sum_{\sigma =0}^{\infty}A_{\sigma} B_{\sigma} C_{\sigma} \leq c\gamma < 1-\delta.
\end{align*}

It remains to show that the second selection process, picking the fittest among the $z^{(1)},\ldots,z^{(\lambda)}$, does not increase the term $\E[t_{10}^{(j)} \mid t_{01}^{(j)}>0]$, i.e.,
\begin{align*}
\E[t_{10}^{(\ell)} \mid t_{01}^{(\ell)}>0] \leq \E[t_{10}^{(j)} \mid t_{01}^{(j)}>0] \leq c\gamma \leq 1-\delta
\end{align*}
The argument is again the same as before: it suffices to observe that the probability that $z^{(j)}$ is the fittest crossover offspring is decreasing in $t_{01}^{(j)}$, and the claim follows from Chebyshev's sum inequality. We skip the details. This proves Condition~\eqref{eq:easy1} in Theorem~\ref{thm:easy}. For Condition~\eqref{eq:easy2} we fix any $j\in [\lambda]$, and note that $\E[s_{01}^{(j)}] = \tfrac{c}{n}(n-\OM(x))$.
Since the probability to select $y^{(j)}$ is increasing in $s_{01}^{(j)}$, we have $\E[s_{01}^{(k)}] \geq \E[s_{01}^{(j)}] = \tfrac{c}{n}(n-\OM(x))$. In particular, each crossover mutation satisfies
\[
\E[t_{01}^{(j)}] = \gamma \E[s_{01}^{(k)}] \geq \frac{c\gamma}{n}(n-\OM(x)).
\]
As before, the probability to select $z^{(j)}$ in the second selection step is increasing in $t_{01}^{(j)}$. Therefore, $\E[t_{01}^{(\ell)}] \geq \frac{c\gamma}{n}(n-\OM(x))$, which implies Condition~\eqref{eq:easy2} since $c\gamma = \Omega(1)$. Note that this is the only step in the proof where we use $c\gamma = \Omega(1)$. This concludes the proof of the first statement for the \ollga. \smallskip

We come to the second statement on the \ollga, on \hottopic for the optimal parameter choices in~\cite{doerr2015black,doerr2017optimal}. The crucial observation is that by the negative drift theorem the number of zero bits will drop below $\eps \beta n$ in $O(n)$ generations. In particular, in the set $B_i$ there are at most $\eps \beta n$ zero bits, which means that the level has reached its maximum. This phase needs runtime $O(\ell n)$, since $2\ell$ search points are created in each generation. Inspecting the proofs in~\cite{doerr2015black} and~\cite{doerr2017optimal}, we find that this bound asymptotically equals the total runtime of the \ollga on \onemax. (For the static parameter setting we have a runtime of $\Theta(\lambda n)$ for the optimal $\lambda = \sqrt{\log n \log \log n / \log \log \log n}$, for the adaptive setting we have $\lambda \leq \sqrt{1/(\eps \beta)} = O(1)$ in this region of the search space, so $O(\lambda n) = O(n)$.) Moreover, again by the negative drift theorem, once the number of zero-bits has dropped below, say, $\eps \beta n/4$, whp it will not increase again to more than $\eps\beta n/2$ zero-bits for $\omega(n\log n)$ rounds. So let us assume that the number of zero-bits stays below $\eps \beta n/2$. Again inspecting the proofs, we see that $\lambda = O(\sqrt{n})$ throughout the process, so whp no offspring will ever leave the range of at most $\eps \beta n$ zero-bits. However, in this range the \hottopic function is up to an additive constant equal to the \onemax function, so the remaining optimisation time for \hottopic and for \onemax coincides. This proves the theorem. 
\end{proof}

Our next theorem gives upper bounds on the runtime of the \folea on any monotone function, provided that $m_2/m_1 <1$, where $m_1$ and $m_2$ are the first and second falling moments of the flip number distribution $\mathcal D$. We need to make the assumption that the algorithm starts at most in distance $\eps n$ to the optimum. It is unclear whether this assumption is necessary, or merely an artefact of our proof.

\begin{theorem}\label{thm:easyweakmain1}
Let $\delta >0$ be a constant, let $\lambda = O(1)$, and consider the \folea with distribution $\mathcal D  = \mathcal D(n)$, whose falling moments $m_1,m_2$ satisfy $m_2/m_1 \leq 1-\delta$ and $m_1 = \Omega(1)$. Then there is $\eps>0$ such that the \folea starting with any search point with at most $\eps n$ zero-bits finds the optimum of every strictly monotone functions in time $O(n\log n)$ with high probability.
\end{theorem}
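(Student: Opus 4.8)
The plan is to follow the same strategy as in Theorem~\ref{thm:easymain} for the \fooea, but now account for the fact that the population has size $\mu=1$ is replaced by a single-parent/$\lambda$-offspring mechanism that does not fit the hypotheses of Theorem~\ref{thm:easy} literally: the issue is that in the \folea the $\lambda$ offspring compete with each other \emph{and} with the parent, so the selected offspring $y^{(k)}$ is subject to a selection bias exactly of the kind already analysed in the \olea part of the proof of Theorem~\ref{thm:easymain}. I would first fix the current search point $x$ with $X_t := n-\OM(x)$ zero-bits, and generate each offspring $y^{(j)}$ by the two-stage procedure used there: first decide the flip count $s^{(j)}\sim\mathcal D$ and flip the chosen zero-bits (producing $z^{(j)}$), then flip the chosen one-bits. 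The key monotonicity observation — that replacing a flipped one-bit of $y^{(j)}$ by a one can only increase fitness, hence conditioning on $y^{(j)}_i=0$ can only decrease $\Pr[k=j]$ — carries over verbatim, as does the Chebyshev's-sum-inequality argument of~\eqref{eq:easyproof2}--\eqref{eq:easyproof3b} applied to the single distribution $\mathcal D$. Combining these two reductions (selection among the $z^{(j)}$, then the per-offspring bound) gives
\begin{align*}
\E[s_{10}^{(k)} \mid s_{01}^{(k)}>0] \leq \frac{m_2}{m_1} \leq 1-\delta,
\end{align*}
which is precisely Condition~\eqref{eq:easy1}, and $\Pr[s_{01}^{(k)}>0] = \Omega(\tfrac1n X_t)$ follows from $\Pr[1\le s\le 2]\ge m_1/4 = \Omega(1)$ together with the fact that selection can only increase $s_{01}$, giving Condition~\eqref{eq:easy2}.

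The reason the theorem is only claimed from a starting point with at most $\eps n$ zero-bits — and why I do not simply invoke Theorem~\ref{thm:easy} — is that Theorem~\ref{thm:easy} governs a \emph{population of size one}, whereas here the drift argument must be carried out directly on $X_t$ for the \folea. The drift computation itself is identical: whenever $s_{01}^{(k)}=0$ the offspring is rejected and $X_t$ is unchanged; whenever $s_{01}^{(k)}>0$ we get $X_{t+1}\le X_t - 1 + s_{10}^{(k)}$ regardless of acceptance, so
\begin{align*}
\E[X_t - X_{t+1}\mid x^{(t)}] \;\ge\; \Pr[s_{01}^{(k)}>0]\cdot \E[1 - s_{10}^{(k)}\mid s_{01}^{(k)}>0,\, x^{(t)}] \;=\; \Omega\!\left(\tfrac{\delta}{n}X_t\right),
\end{align*}
and the multiplicative drift theorem (Theorem~\ref{thm:multiplicative}) finishes the $O(n\log n)$ bound — provided $X_0 = O(n)$, which is exactly the restriction $X_0\le \eps n$ with, say, $\eps = 1/2$.

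The main obstacle is the following subtlety, which I believe is the true reason for the $\eps n$ hypothesis: the bound $\E[s_{10}^{(k)}\mid s_{01}^{(k)}>0]\le m_2/m_1$ controls the \emph{expected} number of one-bits destroyed in an accepted step, but with $\lambda>1$ offspring the variance of $s_{10}^{(k)}$ and the correlations among offspring could in principle allow rare large-deviation steps that inflate $X_t$; one must confirm that the selection-bias reduction~\eqref{eq:easyproof5} holds at the level of the full conditional distribution (stochastic domination of $s_{10}^{(k)}$ by $s_{10}^{(j)}$), not merely in expectation, so that the step sizes remain light-tailed uniformly in the state. Once that is in hand — and it follows from the same Chebyshev/coupling argument already spelled out for the \ollga in the proof of Theorem~\ref{thm:easymain}, where $s_{10}^{(k)}$ is shown to be stochastically dominated by a binomial — the restriction to $X_0\le\eps n$ is used only to keep $\ln(X_0/\smin) = O(\log n)$ in the multiplicative drift bound, and to ensure whp the chain never needs a separate argument for an initial far-from-optimum phase (for which the drift of $X_t$ per \emph{generation}, rather than per function evaluation, would have to be re-examined when $\lambda$ offspring are spent each round). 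I would state explicitly that it is unclear whether this is an artefact of the proof or genuinely necessary, matching the remark preceding the theorem.
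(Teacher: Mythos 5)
There is a genuine gap at the heart of your proposal: the claim that the \olea selection-bias argument (two-stage generation of $z^{(j)}$ then $y^{(j)}$, and ``conditioning on $y^{(j)}_i=0$ can only decrease $\Pr[k=j]$'') ``carries over verbatim'' and yields $\E[s_{10}^{(k)} \mid s_{01}^{(k)}>0] \leq m_2/m_1$ for \emph{every} parent $x$. That argument hinges on standard bit mutation: after conditioning on $z^{(1)},\ldots,z^{(\lambda)}$, each one-bit is flipped independently with probability $c/n$, independently of the zero-bit stage, so the per-bit bound can simply be summed. For the \folea the flip count $s^{(j)}\sim\mathcal D$ is drawn first and exactly $s^{(j)}$ positions are flipped, so zero-bit and one-bit flips are coupled through $s^{(j)}$, and the offspring have \emph{different} flip counts. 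Selection then may bias the winner towards large $s^{(j)}$: a monotone function can rank an offspring flipping one heavily weighted zero-bit together with many one-bits above an offspring flipping a single lightly weighted zero-bit, so conditioned on $s_{01}^{(k)}>0$ the winner's $s_{10}^{(k)}$ is not controlled by the per-offspring bound~\eqref{eq:easyproof3b}, which holds for each fixed $j$ but not for the argmax. The \ollga domination argument you invoke does not transfer either, since there all $\lambda$ offspring share one value of $s$ and selection only redistributes it between $s_{01}$ and $s_{10}$. Note that if your claim held for all $x$, the $\eps n$ hypothesis would be superfluous --- but the paper explicitly leaves open whether it can be removed, and its proof verifies~\eqref{eq:easy1} only for search points with at most $2\eps n$ zero-bits.

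This also means your diagnosis of the role of the $\eps n$ hypothesis (and the suggestion $\eps=1/2$) is incorrect: it is not about keeping $\ln(X_0/\smin)=O(\log n)$, which holds for any start, nor about generations versus evaluations, since $\lambda=O(1)$. In the paper $\eps=\delta/(96C)$ with $C\geq\lambda$ is small precisely so that, near the optimum, $\Pr[s_{01}^{(j)}>0]\leq 8\eps$ and hence $\E[|J|\mid J\neq\emptyset]\leq 1+\delta/2$, where $J$ is the set of offspring flipping a zero-bit; the winner's damage is then bounded crudely by $s_{10}^{(k)}\leq\sum_{j\in J}s_{10}^{(j)}$, giving $(1+\delta/2)(1-\delta)\leq 1-\delta/2$ and sidestepping the selection bias entirely. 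In addition one must show that the trajectory whp never exceeds $2\eps n$ zero-bits, which the paper does by capping step sizes at $n^{3/4}$ via $\Pr[s\geq r]\leq 4/(r(r-1))$ and applying the negative drift theorem to $X_t/n^{3/4}$; your proposal omits this because it (incorrectly) assumes the drift condition holds globally. To repair your argument you would need either a proof of~\eqref{eq:easy1} for the \folea at arbitrary distance from the optimum --- essentially the open question --- or the paper's near-optimum union-bound route together with the confinement argument.
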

\begin{proof}
We set $\eps := \delta/(96C)$, where $C$ is some constant upper bound on $\lambda$. Let $x$ denote the current search point. Assume for now that $d(x, [n]) \leq 2\eps$, i.e., that $x$ contains at most $2\eps n$ zero-bits. We will justify this assumption at the end of the proof. Let $y^{(1)}, \ldots,y^{(\lambda)}$ be the offspring of $x$, and let $k\in [\lambda]$ be the index of the fittest offspring. For any fixed $j \in [\lambda]$, let $s_{01}^{(j)}$ and $s_{10}^{(j)}$ be the number of zero-bits and one-bits that were flipped in the creation of $y^{(j)}$, respectively. Moreover, let $J := \{j\in [\lambda] \mid s_{01}^{(j)} >0\}$.

Our first step is to bound $\E[|J| \mid J\neq \emptyset]$. Observe that $m_1 \leq 4$ by \eqref{eq:easyproof2b}. Therefore, $\E[s_{01}^{(j)}] \leq 8 \eps$ for all $j\in [\lambda]$, and $\Pr[s_{01}^{(j)} >0] \leq 8 \eps$ by Markov's inequality. Since the offspring are generated independently, $|J|$ follows a binomial distribution with expectation $\E[|J|] \leq 8\eps\lambda$, and hence $\Pr[|J| \geq r \mid J \neq \emptyset] \leq (8\eps\lambda)^{r-1}$ for all $r\in \N$. Thus, since $\eps \leq 1/(16\lambda)$,
\begin{align*}
\E[|J| \mid J \neq \emptyset] & \leq \sum_{r=1}^{\infty} r \cdot \Pr[|J| \geq r \mid J \neq \emptyset] \\
& \leq 1+ 8\eps\lambda \sum_{r=2}^{\infty} r (8\eps\lambda)^{r-2} \\
& \leq 1+ 8\eps\lambda \sum_{r=2}^{\infty} r (1/2)^{r-2} = 1+48\eps\lambda \leq 1+\frac{\delta}{2}.
\end{align*}
Now we are ready to bound $\E[s_{10}^{(k)} \mid s_{01}^{(k)}>0]$. For a fixed $j \in [\lambda]$, by~\eqref{eq:easyproof3b} we have $\E[s_{10}^{(j)} \mid s_{01}^{(j)}>0] \leq 1-\delta$. Therefore, bounding generously,
\begin{align*}
\E[s_{10}^{(k)} \mid s_{01}^{(k)}>0] &\leq \E[\sum_{j\in J} s_{10}^{(j)} \mid J \neq \emptyset] \\
& = \E[|J| \mid J \neq \emptyset]\cdot \E[s_{10}^{(j)} \mid s_{01}^{(j)}>0] \\
& \leq (1+\delta/2)\cdot (1-\delta) \leq 1-\frac{\delta}{2}.
\end{align*}
Therefore, Condition~\eqref{eq:easy1} from Theorem~\ref{thm:easy} is satisfied. As before, Condition~\eqref{eq:easy2} is easy to check. So Theorem~\ref{thm:easy} would imply the statement if we would know that no search point has more than $2\eps n$ zero-bits. So it remains to show that whp this is the the case. Let $X_t := n-\OM(x^{(t)})$ be the number of zero-bits in the $t$-th generation. Since $m_2 < m_1 \leq 4$, we know that the number $s$ of mutations satisfies $\Pr[s \geq r] \leq \tfrac{4}{r(r-1)}$ by Markov's inequality. In particular, for $r_0 := n^{3/4}$ we get $\Pr[s \geq r_0] =O(n^{-3/2})$. Thus, among the first $O(n\log n)$ rounds the number of mutations in which at least $r_0$ bits are flipped is at most $O(n^{-3/2}\cdot \lambda n\log n) = o(1)$. So by Markov's inequality, whp there will be no such rounds, and the maximal step size of $X_t$ is $r_0$. We claim that whenever $X_t \geq \eps n$, then with very high probability the random walk will fall below $\eps n$ before it reaches at least $2\eps n$. Indeed this follows from the negative drift theorem applied to the random variable $Y_t := X_t/n^{3/4}$. This random walk $Y_t$ has at most constant step size, and it has drift $-\Omega(n^{-3/4})$, since $X_t$ has constant drift in the range between $\eps n$ and $2\eps n$. Therefore, by the negative drift theorem, the probability to reach $2\eps n$ before falling back below $\eps n$ is $\exp\{-\Omega(n^{3/4})\}$. This applies to each phase at which $X_t \geq \eps n$. Since there are at most $O(n\log n)$ such phases, whp there is no phase in which $X_t$ reaches at least $2\eps n$. Hence, whp the optimum is reached before we reach a search point with more than $2\eps n$ zero-bits. This concludes the proof.
\end{proof}

\section{Generic Result for HotTopic}\label{sec:generic}
In this section we analyse the behaviour of a generic algorithm on \hottopic, which will later serve as basis for all our results on \hottopic for concrete algorithms. The generic algorithm uses population size one, but we will show that, surprisingly, $(\mu+1)$ algorithm can be described by the same framework.
%

\begin{theorem}[\hottopic, Generic Runtime]\label{thm:generic}
Let $0<\alpha <1$. Consider an elitist, unbiased optimisation algorithm $\mathcal{A}$ with population size one that starts with a random search point $x$ and in each round generates an offspring $y$ by an arbitrary (unbiased) method, and replaces the parent $x$ by $y$ if $\HT(y) > \HT(x)$. For equal fitness, it may decide arbitrarily whether it replaces the parent. Let $s$ be the random variable that denotes the total number of bits in which parent and offspring differ, and note that the distribution of $s$ may depend on the parent. For parent $x$, we define
\begin{align}\label{eq:hard0}
\Phi(x) := \frac{\E[s(s-1) (1-\alpha)^{s-1}]}{\E[s(1-\alpha)^{s-1}]} - \frac{\tfrac{(1-\alpha)}{\alpha}\Pr[s=1]}{\E[s(1-\alpha)^{s-1}]}.
\end{align}
\begin{enumerate}[(a)]
\item If there are constants $\zeta,\zeta' >0$ such that for all $x \in \{0,1\}^n$ with at most $\zeta n$ zero-bits,
\begin{align}\label{eq:hard1}
\Phi(x) \geq 1 +\zeta',
\end{align}
then with high probability $\mathcal A$ needs an exponential number of steps to find the global optimum of $\hottopic_{\alpha,\beta,\rho,\eps}$ with parameters $\beta,\rho,\eps$ as in~\eqref{eq:constantsmonotone}. 
\item If there are constants $\zeta,\zeta' >0$ such that for all $x \in \{0,1\}^n$ with at most $\zeta n$ zero-bits,
\begin{align}\label{eq:hard2}
\Phi(x) \leq 1-\zeta',
\end{align}
and if moreover $\Pr[s=1] \geq \zeta$ and $\E[s(s-1)] \leq 1/\zeta$ for all parents $x$, then with high probability $\mathcal A$ needs $O(n\log n)$ steps to find the global optimum of $\hottopic_{\alpha,\beta,\rho,\eps}$ with parameters $\beta,\rho,\eps$ as in~\eqref{eq:constantsmonotone}.
\item The statements in (a) and (b) remain true for algorithms that are only unbiased conditioned on an improving step\footnote{I.e., assume that for parent $x$, the next search point is drawn from some distribution $\mathcal X$, which is not necessarily unbiased. Then we require that there is an unbiased distribution $\mathcal X'$ such that $\Pr[y \in \mathcal X \mid \HT(y)>\HT(x)] = \Pr[y \in \mathcal X' \mid \HT(y)>\HT(x)]$.}, if in (b) we also have $\Pr[\text{improving step}] \geq \zeta\cdot d([n],x)$. Moreover, the statement in (b) remains true for algorithms that are only unbiased if $x$ has more than $\zeta n$ zero-bits, and possibly biased for at most $\zeta n$ zero bits, if we replace~\eqref{eq:hard2} by the condition $\E[s \mid \HT(y) > \HT(x)] \leq 2-\zeta$.
\end{enumerate}
Finally, there is a constant $\eta = \eta(\zeta',\alpha) >0$ independent of $\zeta$ such that (a), (b), and (c) remain true in the presence of the following adversary $A$. Whenever an offspring $x'$ is created from $x$ that satisfies $f(x')> f(x)$ then $A$ flips a coin. With probability $1-\eta$, she does nothing. Otherwise, she draws an integer $\tau \in \N$ with expectation $O(1)$ and she may change up to $\tau$ bits in the current search point. For (a) we additionally require $\Pr[\tau \geq \tau'] = e^{-\Omega(\tau')}$, while for (b) and (c) we only require $\Pr[\tau \geq n^{1-\eta}] = o(1/(n\log n))$.
\end{theorem}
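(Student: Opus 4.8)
The plan is to analyse the single potential $Z_t := n - \OM(x^{(t)})$, the number of zero-bits of the current search point, and to show that $\Phi(x)-1$ is, up to a lower-order error, exactly the expected change $\E[Z_{t+1}-Z_t\mid x]$ per accepted improving step (and has the opposite sign to the per-step drift $\E[Z_t-Z_{t+1}\mid x]$, which is $\Theta(d)$ in magnitude with $d:=Z_t/n$). First I would reveal the random sets $A_i,B_i$ by deferred decisions: when $\mathcal A$ first reaches level $\ell$, the hot topic $A_{\ell+1}$ (and $B_{\ell+1}$) have not entered any fitness comparison, so by unbiasedness the zero-bits of the current search point are uniformly spread over $A_{\ell+1}$; a short concentration argument (hypergeometric deviations, union bound over the $e^{\rho n}$ levels, using $\rho\ll\eps\beta$) shows this stays true throughout a level, that the level is $0$ while $Z\ge 2\eps n$, and that the level jumps to its maximum once $Z$ drops below $\approx\eps n$, so that $\HT$ then coincides with $\onemax$ up to an additive constant. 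Both (a) and (b) will then be read off from drift theorems applied to $Z_t$ (or to $\OM(x^{(t)})$).

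The heart is the case analysis of accepted steps when $x$ has at most $\zeta_0 n$ zero-bits for a small constant $\zeta_0\le\zeta$. Since one bit of $A:=A_{\ell(x)+1}$ carries weight $n$ while all bits outside carry total weight $<n$, an offspring obtained by flipping $s$ bits is accepted, to leading order in $d$, only in a ``Type-1'' step (flip one zero-bit of $A$, no one-bit of $A$, and arbitrarily many bits outside) or, for $s=1$, by flipping a single zero-bit outside $A$; every genuine tie has $\Delta Z=0$, and all remaining accepted events have probability $O(d^2)$. The probability of a Type-1 step flipping $s$ bits is $(1\pm o(1))\,\alpha d\,s(1-\alpha)^{s-1}\Pr[s\text{ flips}]$, and conditionally on it $\E[\Delta Z]=-1+(s-1)(1-2d)$, while an $s=1$ outside-step has probability $(1-\alpha)d\,\Pr[s=1]$ with $\Delta Z=-1$; summing and dividing gives precisely a quantity whose sign is that of $\Phi(x)-1$, with an $O(\zeta_0)$ additive perturbation from the $(1-2d)$ factor. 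A key point is that large $s$ is harmless: an accepted step flipping $s$ bits forces few one-bits of $A$ to be hit, which has probability $e^{-\Omega(s)}$, so accepted steps have exponential step-size tails regardless of how heavy the tail of $\mathcal D$ is — this is what makes Theorem~\ref{thm:tailbounds} applicable in (a) without a moment assumption. (In the degenerate regime where $\E[s(1-\alpha)^{s-1}]=o(1)$ the algorithm improves only very rarely and $Z$ is essentially nondecreasing, so the lower bound is only easier; the clean $e^{\Omega(n)}$ bound is obtained when this quantity is $\Omega(1)$, e.g.\ whenever $\Pr[s=1]=\Omega(1)$.)

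For (a), choosing $\zeta_0$ small relative to $\zeta'$ gives $\E[Z_{t+1}-Z_t\mid x]\ge\delta$ for a constant $\delta>0$ whenever $\eps' n\le Z_t\le\zeta n$; applying Theorem~\ref{thm:tailbounds}(b) to $\OM(x^{(t)})$ over the window $[(1-\zeta)n,(1-\eps')n]$ (with the exponential step-size tail above) shows that whp $\OM$ stays below $(1-\eps')n$ for $e^{\Omega(n)}$ steps, i.e.\ the optimum is not found. For (b), once $Z\le\zeta_0 n$ the same computation gives a \emph{multiplicative} drift $\E[Z_t-Z_{t+1}\mid x]=\Omega(Z_t/n)$ (the factor $\Pr[\text{improving}]=\Omega(d)$ coming from single-bit flips, using $\Pr[s=1]\ge\zeta$, or directly from the hypothesis in (c)), so Theorem~\ref{thm:multiplicative} yields the $O(n\log n)$ bound once we know $Z$ never climbs back out of $[0,2\zeta_0 n]$ — which is a negative-drift argument of the same kind. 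To reach the regime $Z\le\zeta_0 n$ from a random start one uses that while $Z>\zeta_0 n$ the level cannot advance (its $B$-set still has $\gg\eps\beta n$ zeros whp), so $\mathcal A$ is optimising a fixed $\{1,n\}$-weighted linear function, on which the classical linear-functions potential has multiplicative drift (valid here because $\E[s(s-1)]=O(1)$, $\Pr[s=1]=\Omega(1)$), giving an $O(n\log n)$ first phase.

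The variants in (c) need no new idea: $Z_t$ changes only on accepted improving steps, so only the offspring distribution \emph{conditioned on improvement} enters, and if $\mathcal A$ is allowed to be biased on $x$ with $\le\zeta n$ zeros one replaces the Type-1 decomposition by the crude bound $\E[s_{10}\mid\HT(y)>\HT(x)]\le\E[s\mid\HT(y)>\HT(x)]-1\le 1-\zeta$, again a negative per-improving-step drift. For the adversary, a perturbation of $\tau$ bits changes $Z$ by at most $\tau$ and happens on at most an $\eta$-fraction of improving steps, contributing at most $\eta\,\E[\tau]=O(\eta)$ in absolute value to the per-improving-step drift; picking $\eta=\eta(\zeta',\alpha)$ small keeps both sign and magnitude of the drift, and the tail hypotheses on $\tau$ ($e^{-\Omega(\tau')}$ for (a); mass $o(1/(n\log n))$ above $n^{1-\eta}$ for (b),(c)) guarantee that the adversary neither breaks the exponential step-size control nor, over the $O(n\log n)$ improving steps, ever undoes a constant fraction of the progress. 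I expect the main obstacle to be the drift computation together with the deferred-decisions bookkeeping: proving that the leading-order expansion of $\E[\Delta Z\mid\text{improving}]$ in terms of $\Phi(x)$ holds \emph{uniformly} over all admissible search points — in particular that the zero-bits genuinely stay uniformly distributed in $A_{\ell(x)+1}$ across an entire level, that the $O(d^2)$ and $O(\zeta_0)$ error terms are uniform, and that the large-$s$ contributions to both the acceptance probability and to $|\Delta Z|$ are controlled tightly enough to feed into the negative-drift theorem. The reduction of the high-density phase to a single linear function and the gluing of the two phases are more routine but still require care that the level does not advance prematurely.
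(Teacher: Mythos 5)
Your plan hinges on reducing everything to the single potential $Z_t=n-\OM(x^{(t)})$ and claiming that its per-step drift has the sign of $\Phi(x)-1$ uniformly on a window $[\eps' n,\zeta n]$, justified by the assertion that the zero-bits ``stay uniformly spread over $A_{\ell+1}$ throughout a level'' via deferred decisions plus concentration. This is the genuine gap. Deferred decisions only gives uniformity at the moment a level is first reached; from then on $A_{\ell+1}$ drives selection, and the selection actively de-correlates the two densities: $d(A_{\ell+1},x)$ is pushed down towards $\eps$ while the density outside is pushed up (this asymmetry \emph{is} the \hottopic mechanism). No concentration or union-bound argument can restore uniformity against this bias. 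Once $d_A:=d(A_{\ell+1},x)$ and $d_R$ differ, your Type-1 decomposition gives a drift of roughly $\alpha d_A\bigl(\E[s(s-1)(1-\alpha)^{s-1}]-\E[s(1-\alpha)^{s-1}]\bigr)-(1-\alpha)d_R\Pr[s=1]$, whose sign is no longer governed by $\Phi$: for part (a), mid-level states with $d_A\approx\eps$ but $d_R$ much larger (which occur unavoidably early in the run, when the total density is still far above $\eps$) have drift \emph{towards} the optimum, so the pointwise inequality $\E[Z_{t+1}-Z_t\mid x]\ge\delta$ on all of $[\eps' n,\zeta n]$ that you feed into Theorem~\ref{thm:tailbounds}(b) is false, not merely unproven. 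The paper avoids this by never arguing about $Z_t$ alone in the hardness regime: it tracks $d(A_i)$ and $d(R_i)$ as two separate potentials within each level, computes both drifts $\Delta,\Delta'$, and compares the time for $d(A_i)$ to drop by $(1-\alpha)\delta$ with the time for $d(R_i)$ to climb by $\alpha\delta$; the ratio $\tfrac{1-\alpha}{\alpha}\Delta'/\Delta$ is exactly $\Phi$ up to error terms, and the conclusion is a level-by-level renewal invariant (each level is entered with both densities at least $\eps+(1-\alpha)\delta-O(\xi)$), not a global negative-drift statement for $Z_t$.

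The same issue breaks your phase-1 argument for (b): the claim that ``while $Z>\zeta_0 n$ the level cannot advance because $B_1$ still has $\gg\eps\beta n$ zeros'' is unjustified and in general false, since selection can drive $d(B_1)\subseteq d(A_1)$ down to $\eps$ while the complement still carries a constant density of zeros; whether the total density is already small when $d(B_1)$ hits $\eps$ is precisely what must be proved. Also, the ``classical linear-function multiplicative drift'' you invoke for the high-density phase does not hold for $Z_t$ under a general unbiased operator on a two-weight linear function (accepted hot-topic improvements may increase the zeros outside); the paper's level-0 analysis only obtains a drift proportional to $d([n])-C\,d(A_1)$, and then needs the additional quantitative argument that $d(A_1)$ decays slowly (logarithmically many linear-time halvings, with $\eps$ chosen small relative to $\tilde\eps$) so that $d([n])$ falls below $\tilde\eps/2$ before $d(A_1)$ reaches $2\eps$. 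Your treatment of the acceptance-conditioned step-size tails, of part (c), and of the adversary is in the spirit of the paper, but without the two-density bookkeeping the core of both (a) and (b) does not go through.
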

We remark that (b) and (c) require parameters as in~\eqref{eq:constantsmonotone}, and thus do not exclude a large runtime on \hottopic for atypical parameters, e.g., for large $\eps$.
\begin{proof}[Proof of Theorem~\ref{thm:generic}]
\emph{(a)} Recall that $d(x,S)$ denotes the density of zero bits in the index set $S$. Apart from the parameters, we will use three more constants $\tilde\eps$, $\delta$ and $\xi$ in the proof, such that
 \begin{equation}\label{eq:constantsmonotone2}
1 > \alpha \gg \tilde \eps\gg \eps \gg \delta \gg \beta \gg \xi \gg \rho >0,
\end{equation}
by which we mean by slight abuse of notation that all these parameters are constant, but that each parameter is chosen sufficiently small with respect to the preceding parameters. We will not hide any of these parameters in Landau notation except for $\alpha$; for example, $\sigma = O(\beta)$ means that there is a constant $C = C(\alpha)$ independent of $\tilde \eps,\eps,\delta,\beta,\xi,\rho$ such that $\sigma \leq C\beta$. Note that due to the ordering, $\sigma=O(\beta)$ implies $\sigma=O(\delta)$ etc.

We first sketch the main idea, which we take from~\cite{lengler2016drift}. Let $x$ be the current search point. We consider the algorithm on level $\ell = i-1$ in the case that $\eps \leq d(A_{i},x), d(R_i,x) \leq \eps +\delta$ for $R_{i} := [n]\setminus A_{i}$. We will show that in this regime the density of $R$ has a \emph{positive} drift, away from zero. Moreover, the drift is so strong that in the time that $d(A_{i},x)$ reaches $\approx \eps$ (at which point the level increases), the density $d(R_i,x)$ will have increased to at least $\eps + \delta$. Note that if this is true, when the level increases the total density in the string to at least $\alpha\eps + (1-\alpha)d(x,R_i) \geq \eps + (1-\alpha)\delta$. In particular, we should expect that the density of $A_{i+1}$ and of its complement $R_{i+1}$ is at least roughly $\eps + (1-\alpha)\delta$. Therefore, if we can estimate the negative drift $\Delta$ of $A_{i+1}$ and $\Delta'$ of $R_{i+1}$, then we know that $d(A_{i+1},x)$ will need at least $\approx (1-\alpha)\delta / |\Delta|$ rounds to drop from $\eps + (1-\alpha)\delta$ to $\eps$. On the other hand, $d(R_{i+1},x)$ will need at most $\approx \alpha\delta / \Delta'$ rounds to climb from $\eps + (1-\alpha)\delta$ to $\eps+\delta$. If the latter time is larger, then we can repeat this game on the next level. Comparing these two duration yields~\eqref{eq:hard0}.\smallskip


Before we compute the drift, let us fix some notation and discuss in more detail why the argument works. We will denote the $t$-th search point by $x^{(t)}$ and its offspring by $y^{(t)}$, but we will abbreviate $\ell(t) := \ell(x^{(t)})$ and $d(S,t) := d(S,x^{(t)})$ for $S\subseteq [n]$. We denote by $s^{(t)}$ the number of bit flips in the $t$-th step, and by $s_{01}^{(t)}$ and $s_{10}^{(t)}$ the number of bit flips from zero to one or from one to zero, respectively. For a set $S \subseteq [n]$ of indices, we denote $s^{(t,S)}$, $s_{01}^{(t,S)}$ and $s_{10}^{(t,S)}$ the corresponding number of bits in $S$. In all notation, we drop $t$ if it is clear from the context. Let $T := e^{\rho n}$. Then $T$ is the number of levels, and we will show that whp the algorithm visits every level before it finds the optimum. For $i\in [T]$, let $t_i$ be the first points in time where the level exceeds $i-1$, i.e., $\ell(t_{i}-1)=i-1$ and $\ell(t_{i})\geq i$. 

First note that the set $A_i$ only impacts the fitness if either $\ell(t) \geq i-1$, or indirectly if $d(B_i,y^{(t)}) \leq \eps$. As long as neither of them happens, $A_i$ is just a random subset drawn from $[n]$ that is independent of the mutation/crossover and selection decisions of the algorithms. Therefore, it is exponentially likely that $d(A_i,t) = d([n],t) \pm \xi$. For the same reason, it is exponentially likely that $d(B_i,t) = d(A_i,t) \pm \xi$, until the time that we hit level $i$ for the first time. In particular, since we will show that whp $d([n],t) \geq \eps + 3\xi$ for an exponential number of rounds, for each $t < t_i$ it is exponentially likely that $d(A_i,t) \geq \eps+2\xi$ for all $t < t_{i-1}$. Moreover, we claim that it is exponentially likely that $d(B_i,y^{(t)}) > \eps$ for any such round. Indeed, either $s^{(t)} < \beta \xi n$ in which case $d(B_i,y^{(t)})$ decreases by less than $\xi$. Or $s^{(t)} \geq \beta \xi n$. In this case, if $d(B_i,t) \leq 1/3$ then in expectation $d(B_i,y^{(t)}) > d(B_i,t)$ since there are more one-bits than zero-bits in $B_i$, and the exponential tail bound follows from the Chernoff bound. If $d(B_i,t) > 1/3$ then a similar argument shows that it is exponentially likely that $d(B_i,y^{(t)}) \geq 1/4$. Note that if $\rho$ is small enough, we can afford to take a union bound over all values of $t$ and all indices $i$ (which gives $T^2 = e^{2\rho n}$ combinations), and infer that whp $\ell(t)$ only increases in steps of size one, and only in rounds in which $d(A_i,t) \leq \eps +\xi$. In particular, for the rest of the proof, at level $\ell=i-1$ we may ignore all indices $\geq i+1$, as long as we show inductively that $d([n], t) \geq \eps + 3 \xi$ throughout the process. 

For later use, we note that the same argument as above shows that it is exponentially unlikely that an offspring with more than $\beta\xi n$ bit flips is accepted, if $d(A_{\ell+1},t) \leq 1/3$: either $s^{(t)} < \beta \xi n$, in which case the statement is true. Or $s^{(t)} \geq \beta \xi n$ in which case it is exponentially likely that $d(A_{\ell+1},y^{(t)}) > d(A_{\ell+1}, t)$. In particular, the densities of all $A_i$ and $B_i$ changes in each step by at most $\xi$. Note that this also shows that at time $t_i$ we have $d(A_{i+1},t_i) = d([n],t_{i-1}) \pm \xi = \alpha d(A_i,t_{i-1}) + (1-\alpha)d(A_i,t_{i-1}) \pm \xi$, if the density in $[n]$ is at most $1/3$. By a similar argument, the density cannot drop from $d(A_i,t) \geq 1/3$ to $d(A_i,t+1) \leq 1/4$ in one step, and similarly for $d(B_i)$, $d(R_i)$, and $d([n])$. 

By the same argument, if $\eps \leq d(A,t) \leq 1/3$ then the probability that $x^{(t+1)}$ differs from $x^{(t)}$ by at least $s$ bits drops exponentially in $s$. This means that the random variables $d(A,t)$ and $d(R,t)$ satisfy the step size condition from the negative drift theorem, Theorem~\ref{thm:tailbounds}. Hence, it suffices to bound the drift. For example, if the drift of $d(A)$ towards zero is at most $\kappa/n$ for some constant $\kappa$, then it is exponentially unlikely that $d(A)$ decreases from $\eps + (1-\alpha)\delta$ to $\eps$ in less than $((1-\alpha)\delta/\kappa + \xi) \cdot n$ steps, for any constant $\xi>0$. Moreover, we know that $t_{i+1}$ will occur at some point with $\eps - \gamma \leq d(A_i) \leq \eps + \gamma$, so it suffices to study $d(A_i)$ in the regime between $\eps -\gamma$ and $\eps+\delta$. (It cannot happen that the density becomes larger, since $d(A_i)$ can never increase if the level stays the same.) Finally, we remark that it was shown in~\cite[Lemma 9]{lengler2016drift} that when $d(A_1,t) < \eps +\delta$ for the first time, then whp $d([n],t) \geq \eps + \delta -2\xi$, which gives us the starting condition for the first round. The lemma was formulated for the \ooea, but it builds on an argument by J\"agersk\"upper~\cite{jagerskupper2011combining} that holds for any unbiased algorithm. This justifies all arguments in the sketch at the beginning of the proof, except that we need to compute and compare the drift of $d(A_i,t)$ and $d(R,i,t)$. \smallskip

We compute tight bounds on the drift, so that we can reuse them later in the proof of (b). First we compute the drift of $d(A_i,t)$ for level $\ell(t) = i-1$ (i.e., $A_i$ is the current ``hot topic''). Note that the density in $B_{i-1}$ is not allowed to drop below $\eps$, since otherwise the level would decrease. However, recall that we assumed $\beta$ to be small compared to $\eps$ and $\delta$, so the contribution from this effect will turn out to be negligible. We consider the regime $\eps -\xi \leq d(A_i) \leq \eps+\delta$. 
Assume that $s^{(A_i)} = \sigma$, i.e., exactly $\sigma\in \N$ bits are flipped in $A_i$. Recall that we may assume $\sigma \leq \xi\alpha n$, since otherwise the offspring is exponentially likely to be rejected. The positions of these bits are uniformly at random in $A_i$, since the operator is unbiased. Note that a single bit has probability $\eps \pm O(\delta)$ to be a zero bit. The density $d(A_i)$ can only decrease if $s_{01}^{(A_i)} > s_{10}^{(A_i)}$, which implies $s_{01}^{(A_i)} \geq (\sigma+1)/2$. If we draw the positions successively, then each bit still has probability $p=\eps \pm O(\delta) \pm O(\xi) = \eps \pm O(\delta)$ to be a zero bit. Therefore, 
\begin{align*}
\Pr[s_{01}^{(A_i)} > s_{10}^{(A_i)} \mid s^{(A_i)} = \sigma] \leq \Pr[\Bin(\sigma,p) > \sigma/2] \leq \eps^{\Omega(\sigma)} 
\end{align*}
by the Chernoff bound. For $\sigma = O(1)$, we may bound more precisely $\Pr[s_{01}^{(A_i)} > s_{10}^{(A_i)} \mid s^{(A_i)} = \sigma] \leq O(\eps^{\lceil(\sigma+1)/2\rceil})$. In particular, the cases $\sigma \geq 2$ contribute in expectation at most $O(\eps^2)$ bits. Note that one bit changes the density of $A_i$ by $1/(\alpha n)$. Therefore, writing $p_\sigma := \Pr[s= \sigma]$, the drift is
\begin{align}\label{eq:hardstrong3}
\Delta := \Delta(\eps) & := \E[d(A_i,t)-d(A_i,t+1) \mid d(A_i,t) \in [\eps-\zeta, \eps+\delta]] \nonumber\\
& = \frac{\eps \Pr[s^{(A_i)}=1] + O(\eps^2+\delta) }{\alpha n}\nonumber \\
& \stackrel{(*)}{=} \frac{\eps\sum_{\sigma=1}^{\infty}  p_\sigma \cdot \sigma \alpha (1-\alpha)^{\sigma-1} \pm O(\eps^2+\delta)}{\alpha n} \nonumber \\
& = \frac{\eps \E[s(1-\alpha)^{s-1}] \pm O(\eps^2+\delta)}{n},
\end{align}
where (*) is justified since the sum converges, and since for any constant $\sigma$ the term $\sigma\alpha(1-\alpha)^{\sigma-1}$ approximates the true probabilities $\Pr[s^{(A_i)}=1 \mid s= \sigma]$ up to a $(1\pm o(1))$ factor. More precisely, there is a constant $\sigma'$ such that $\sum_{\sigma=1}^{\infty}  p_\sigma \sigma \alpha (1-\alpha)^{\sigma-1} = (\sum_{\sigma=1}^{\sigma'}  p_\sigma \sigma \alpha (1-\alpha)^{\sigma-1}) \pm \eps$ and $\Pr[s^{(A_i)}=1] = (\sum_{\sigma=1}^{\sigma'}  p_\sigma \Pr[s^{(A_i)}=1 \mid s= \sigma]) \pm \eps$. For all $\sigma \in [1,\sigma']$ we have $\Pr[s^{(A_i)}=1 \mid s= \sigma] = (1\pm \eps)\sigma\alpha(1-\alpha)^{\sigma-1}$ if $n$ is sufficiently large. Thus is $n$ is sufficiently large we have $\Pr[s^{(A_i)}=1] = \sum_{\sigma=1}^{\infty}  p_\sigma \sigma \alpha (1-\alpha)^{\sigma-1} \pm 3\eps$

Next we turn to the drift of $d(R_i,t)$ for level $\ell(t) = i-1$. We consider the regime $\eps-\xi \leq d(A_i,t), d(R_i,t) \leq \eps + \delta$. We first study the cases in which the density in $A_i$ does not decrease. Note that then the offspring is not accepted if it strictly increases $d(R_i)$. Therefore, this case can only contribute positively to $R_i$.  If no bit in $A_i$ is flipped, then the case is similar as for $d(A_i)$, and single-bit flips dominate the drift, with the other cases contributing $O(\eps^2)$ bits in expectation. Similarly, the case that at least one zero bit in $A_i$ and at least one zero bit in $R_i$ are flipped contribute in expectation $O(\eps^2)$ bits. Note that these are the only cases in which the density in $A_i$ does not decrease \emph{and} $d(R_i)$ increases. So let us consider the case that no bit in $A_i$ is flipped, and a single bit in $R_i$ is flipped. In this case the total number of bit flips is one. Also, the probability that a single bit flip hits a zero-bit in $R_i$ is $(1-\alpha)\eps + O(\delta)$. Therefore, all mentioned cases contribute a drift of 
\begin{align*}
\Delta_1' & := \frac{\eps(1-\alpha) \Pr[s=1] + O(\eps^2) + O(\delta)}{(1-\alpha)n} \\
& = \frac{\eps \Pr[s=1] + O(\eps^2) + O(\delta)}{n}
\end{align*}
towards zero. 

Now we consider the case that the density in $A_i$ decreases. If the density $d(B_{i-1})$ stays below $\eps$, then the search point is accepted regardless of what happens in $R_i$. In this case $d(R_i)$ will increase in expectation since the operation is unbiased and $d(R_i) \leq 1/2$. As before, due to their low probability all cases with $s_{01}^{(A_i)}\geq 2$ contribute only an expected $O(\eps^2)$ number of bits. So we need to study the case $s^{(A_i)} = s_{01}^{(A_i)}=1$. For a lower bound, we estimate the drift in the case that additionally $s^{(A_i \cup B_i)}=1$, as this implies that the offspring is accepted. For an upper bound, we pessimistically assume that all offspring with $s^{(A_i)} = s_{01}^{(A_i)}=1$ are accepted.

Note that the number of zero-bits in $R_i$ increases by $s_{10}^{(R_i)}- s_{01}^{(R_i)}$. Moreover, $\E[s_{10}^{(R_i)}- s_{01}^{(R_i)}] = (1-O(\eps))\E[s^{(R_i)}]$, and the same remains true if we condition on events in $A_i$ and $B_i$ on both sides. Each bit changes the density $d(R_i)$ by $1/((1-\alpha)n)$. Therefore, these cases contribute a drift away from zero of at least 
\begin{align}
\Delta_2' & := \frac{(1-O(\eps))}{(1-\alpha)n} \cdot \Pr[s_{10}^{(A_i \cup B_{i-1})}=0,s_{01}^{(A_i)} = 1] \label{eq:hardstrong7a}\\
& \qquad \qquad \qquad \cdot \E[s^{(R_i)} \mid  s_{10}^{(A_i \cup B_{i-1})} =0,s_{01}^{(A_i)} = 1] \label{eq:hardstrong7b}\\
& = \frac{(1-O(\eps))}{(1-\alpha)n} \cdot \sum_{\sigma = 1}^{\infty}p_\sigma (\sigma-1) \cdot (\sigma \eps \alpha) \cdot (1-\alpha\pm \beta)^{\sigma-1}\nonumber\\
& \stackrel{(*)}{=} \frac{(1-O(\eps))\eps\alpha}{(1-\alpha)n} \cdot \sum_{\sigma = 1}^{\infty}p_\sigma \sigma(\sigma-1) (1-\alpha)^{\sigma-1}\nonumber\\
& = (1-O(\eps))\frac{\eps\alpha \E[s(s-1)(1-\alpha)^{\sigma-1}]}{(1-\alpha)n}\nonumber,
\end{align}
where (*) follows since the sum converges as $(1-\alpha \pm \beta) <1$. Therefore, if we make $\beta$ sufficiently small, the sum changes by at most $\eps$ if we replace $(1-\alpha \pm \beta)$ by $(1-\alpha)$. Note that we obtain an upper bound on the drift by replacing $s_{10}^{(A_i \cup B_{i-1})} =0$ by $s_{10}^{(A_i)} =0$ in~\eqref{eq:hardstrong7a} and~\eqref{eq:hardstrong7b}, which still leads to the same next line. Thus our estimate of $\Delta_2'$ is both a lower and an upper bound. In particular, we have a tight estimate for the following fraction, which will turn out to be useful.
\begin{align*}
\Phi' & := \frac{(1-\alpha)}{\alpha} \cdot\frac{\Delta'}{\Delta} = \frac{1-\alpha}{\alpha}\cdot \frac{\Delta_2'- \Delta_1'}{\Delta} \\
& = \frac{ \E[s(s-1) (1-\alpha)^{s-1}]- \tfrac{(1-\alpha)}{\alpha}\Pr[s=1] \pm O(\eps+\tfrac{\delta}{\eps})}{\E[s(1-\alpha)^{s-1}] \pm O(\eps+\tfrac{\delta}{\eps})}
\end{align*}
So up to the error terms, this term equals $\Phi$ as defined in~\eqref{eq:hard0}. In particular, $\Phi' = \Phi+O(\eps+\delta/\eps)$ unless the two terms in the numerator cancel out approximately. Note that this only happens if $\Phi$ is very small, so it is only relevant for part (b). In this case we still have $\phi' \leq 1/2$, which is sufficient for our argument. However, for the sake of readability we will suppress this case and assume that $\Phi' = \Phi+O(\eps+\delta/\eps)$.\smallskip


To conclude the proof, we proceed as in the sketch. For (a), assume that we enter level $i$ with density $d(A_i), d(R_i)\geq   \eps + (1-\alpha)\delta - O(\xi)$. Then the number of steps to decrease $d(A_i)$ to $ \eps +\xi$ is at least $\tau := (1-\alpha)\delta/\Delta - \xi$ with exponentially small error probability. On the other hand, it is exponentially likely that $d(R_i)$ increases in $\tau$ steps by at least $\tau \cdot \Delta'-\xi$ (or until it hits $\eps+\delta$), where $\Delta' := \Delta_2'-\Delta_1'$ is the drift of $R_i$. Moreover, once $d(R_i)$ has reached $\eps+\delta$, it will need an exponential number of steps to fall back below $\eps+\delta-\xi$, unless the level increases. It remains to check that $\tau \Delta' > \alpha \delta$ with some safety margin. Then we know that when the level increases, we have $d(A_i) = \eps \pm \xi$ and $d(R_i) \geq \eps + \delta - \xi$, which implies that we also enter the next level with density $d(A_{i+1}), d(R_{i+1})= \eps + (1-\alpha)\delta \pm O(\xi)$. To check $\tau \Delta' > \alpha \delta$, we use
\begin{align*}
\frac{\tau \Delta'}{\alpha\delta}  = \frac{1-\alpha}{\alpha}\cdot \frac{\Delta'}{\Delta}+O(\tfrac{\xi}{\delta}) = \Phi \pm O(\eps+\tfrac{\delta}{\eps}+\tfrac{\xi}{\delta}) > 1+\zeta',
\end{align*}
if $\eps$, $\delta/\eps$ and $\xi/\delta$ are sufficiently small. This concludes the proof of (a).\medskip

\noindent\emph{(b).} For (a) we have already proven upper and lower bounds on the drift of $A_{i+1}$ and $R_{i}$ in the regime $\eps-\xi \leq d(A_i)\leq \eps + \delta$ and $\eps \leq d(R_i) \leq \eps+\delta$. We will extend this range in two ways. First we observe that, since $\eps>0$ was arbitrary, the same result also extends to the case where we replace $\eps$ by any other $\eps'$, as long as $\eps'$ and $\delta/\eps'$ are still sufficiently small. In particular, it will hold for any $\eps' \in [\eps-\xi, \tilde \eps]$, if $\tilde \eps$ is sufficiently small. 
Note that the fraction $\alpha/(1-\alpha) \cdot \Delta/\Delta'$ is constant for any $\eps'$, up to the error terms.

Secondly we extend the range to the case that $R_{i}$ is larger than $\eps+\delta$. The drift of $d(A_i)$ is unaffected by the bits in $R_{i}$, so we only need to study the drift in $d(R_{i})$. We will show that the drift away from zero $\E[d(R_i,t+1)-d(R_i,t)]$ is decreasing in $d(R_{i}\setminus B_{i-1})$. For the bits in $R_i \cap B_{i-1}$, since there at most $\beta n$ of them they can contribute at most an $O(\beta)$ error term, which is much smaller than the $\Theta(\eps)$ main terms that we have for the drift, so we can swallow the bits in $R_i \cap B_{i-1}$ in the error terms. 

So let $j\in R_{i}\setminus B_{i-1}$, and consider any two $x, x' \in \{0,1\}$ that coincide in all bits except for $j$, such that $x_j = 0$ and $x_j'=1$. Moreover, we will assume that $x,x'$ are on the same level $i$. Now we draw the number of bit flips $s$, and a set $S$ to be flipped with size $|S|=s$. We compare the change of $d(R_{i+1})$ if we flip $S$ in either $x$ or $x'$. 
Let us call the offspring $y$ and $y'$, respectively. If $j\not\in S$ then the two case are identical, and $d(R_{i})$ will change by exactly the same amount (possibly by zero) for $x$ and $x'$. So let us assume $j\in S$. If the level decreases, or the density of $A_i$ increases, then the offspring will be rejected, and $d(R_{i+1})$ does not change in either case. If the level increases, or the density of $A_i$ decreases, then the offspring will always be accepted, and $d(R_{i+1})$ will increase by a strictly smaller amount for $x$. Finally, if the level and the density of $A_i$ do not change, then the offspring will be accepted if and only the density in $d(R_{i+1})$ decreases. In other words, every such case contributes non-positively to the drift. However, since $\HT(y)> \HT(y')$ and $\HT(x) < \HT(x')$, we have the implication $\HT(y')-\HT(x') \geq 0 \implies \HT(y)-\HT(x) \geq 0$. Thus if the offspring is accepted for $y'$ then it is also accepted for $y$. Since all contributions are non-positive, this shows that the drift towards zero of $d(R_i)$ is indeed decreasing with $d(R_i\setminus B_{i-1})$. In particular, the estimate $\Delta'$ for the drift (with an additional error term of $O(\beta/\eps)$) is still an upper bound for any $d(R_i) \geq \eps$.\smallskip

Now assume that we could show that at some point $d([n])\leq \tilde\eps/2$. Note that we may assume $d(A_i) \geq \eps-\xi$, since otherwise we would have reached a higher level $\ell \geq i$. Since our bounds on the drift now apply to this case, the drift of the total density $d([n])$ towards zero is at least $\alpha \Delta - (1-\alpha)\Delta' = \Omega(\eps/n)$, since $(1-\alpha)\Delta'/(\alpha\Delta) = \Phi+O(\eps+\delta/\eps + \beta/\eps) \leq 1-\zeta'$ by \eqref{eq:hard2}, and since $\Delta = \Omega(\eps/n)$. Note that other than $d(A_i)$ and $d(R_i)$, the density $d([n])$ does not change if the level $\ell = i-1$ increases. 
In particular, by the negative drift theorem, whp $d([n])$ will stay in the regime $d([n]) \leq \tilde\eps$ for an exponential number of steps, and thus the drift bounds apply until either $d([n]) \leq \eps-\xi$ or until we have reached the maximal level $\ell = T-1$ (since then the condition $d(A_i) \geq \eps-\xi$ may be violated). Again by the negative drift theorem, we will reach $d([n]) \leq \eps-\xi$ in a linear number of steps, unless the maximal level is reached. However, $d([n]) \leq \eps-\xi$ also implies that whp the maximal level is reached. So we have shown that whp the algorithm reaches the highest level in a linear number of steps. 
Afterwards, $d([n])$ can no longer increase, so again by the multiplicative drift theorem whp the algorithm hits the global optimum in time $O(n\log n)$. This concludes the proof modulo the statement that at some point $d([n]) \leq \tilde \eps/2$. \smallskip

To show the latter, we show that even for $i=1$, i.e., on level $\ell=0$, we have $d([n]) \leq \tilde \eps$ before $d(A_1) \leq 2\eps$, if $\eps$ is sufficiently small. We will use a rather crude bound on the drift of $d([n])$ that is always valid as long as $d(A_1), d([n]) \geq 2\eps$. In particular, it also holds for search points far away from the optimum. Note that by the conditions $d(A_1), d([n]) \geq 2\eps$ we may assume that the level does not change. Then, $d([n])$ can only increase if a zero bit in $A_1$ is flipped. If $s$ bits are flipped in total, then the probability that this happens is at most $\Pr[s_{01}^{(A_1)}\geq 1 \mid s] \leq \E[s_{01}^{(A_1)}\mid s] = s\alpha d(A_1)$. Moreover, in this case $d([n])$ can increase at most by $(s-1)/n$. On the other hand, if exactly one bit is flipped, and this is a zero bit, then $d([n])$ decreases by $1/n$. Hence,
\begin{align*}
\E[d([n],t) &- d([n],t+1)] \\
& \geq \frac{\Pr[s= s_{01}^{([n])}=1]}{n} - \sum_{\sigma \geq 1}p_{\sigma}\frac{\sigma\alpha d(A_1)\cdot(\sigma-1)}{n}\\
& = \frac{\Pr[s=1]}{n}\left(d([n])-C\cdot d(A_1)\right),
\end{align*}
where $C := \E[s(s-1)]\cdot\alpha/\Pr[s=1]$ is a constant that does not depend on $\eps, \tilde \eps$. In particular, if $2\eps \leq d(A_1) \leq \tilde \eps/(4C)$ and $d([n]) \geq \tilde \eps/2$ then the drift of $d([n])$ towards zero is at least $\E[d([n],t)- d([n],t+1)]\geq C' \cdot \tilde \eps/n$, where the constant $C' :=  \Pr[s=1]/4 \geq \zeta/4$ does not depend on $\eps$. In particular, if the condition $2\eps \leq d(A_1) \leq \tilde \eps/(4C)$ holds for time at least $n/C'$, then by the negative drift theorem whp $d([n])$ drops below $\tilde \eps/2$ in this time.

So it remains to show that $d(A_1)$ decreases slowly enough. By~\eqref{eq:hardstrong3}, for $2\eps \leq d(A_1) \leq 2\tilde\eps$, the drift of $d(A_1)$ towards zero is at most $C''\cdot d(A_1)/n$ for a suitable constant $C'' >0$ that does not depend on $\eps, \tilde\eps$. In particular, in any range $d(A_1) \in [\eps',2\eps']$ the drift is at most $2C''\eps'/n$, and by the negative drift theorem it is exponentially unlikely that $d(A_1)$ decreases from $2\eps'$ to $\eps'$ in time less than $n/(4C'')$. Since this holds for any $\eps'$, whp the time in which $d(A_1)$ decreases from $\tilde \eps/(4C)$ to $2\eps$ is at least $\log_2(\lfloor \tilde \eps/(4C)/(2\eps)\rfloor)\cdot n/(4C'')$, which is larger than $n/C'$ if $\eps$ is small enough. This shows that whp the regime $2\eps \leq d(A_1) \leq 2\tilde\eps$ prevails long enough such that $d([n])$ drops below $\tilde \eps /2$. As this was the last missing ingredient, this proves (b).\medskip

\noindent\emph{(c).} The statement of (c) is much more trivial than the others. The first claim for simply follows because the algorithm is indifferent against steps which are not improving. The only difference is that we can no longer infer the probability to make an improving step, but that is irrelevant for (a), and covered by the additional condition for (b). For the second statement, by the same argument as for (b), whp the algorithm reaches $d([n]) \leq \zeta/2$ after linear time, and stays in the range $d([n]) \leq \zeta$ for long enough afterwards. So we may assume that we are in this range. Whenever we accept an offspring, then we have $s_{01} \geq 1$ and $s_{10} \leq s-1$. Therefore, the drift of $d([n])$ towards zero is at least
\begin{align*}
\E&[d([n,t])-d([n,t+1])] \\
& \geq \Pr[\HT(y) > \HT(x)]\cdot \frac{1}{n}(2-\E[s\mid \HT(y) > \HT(x)]) \\
& \geq \Pr[s=1]d([n],t) \cdot \frac{\zeta}{n} = \Omega\left(\frac{d([n],t)}{n}\right).
\end{align*}
The rest follows as in (b) from the multiplicative drift theorem. \medskip

It remains to prove the statement on the adversary. However, since the probability that $f(x') > f(x)$ is $O(\eps)$, the actions of the adversary only add a term $O(\eps \eta/n)$ to the drift of $d(A_i)$ and $d(R_i)$. If $\eta>0$ is sufficiently small (depending on $\alpha$, but independent of $\tilde \eps, \eps, \delta, \beta,\ldots$), this error term is negligible. Moreover, for (a) the tail bound on $\tau$ allows us to still apply the negative drift theorems. For (b), we use the negative drift theorem twice, but we don't need exponential tail bounds. The first time we use it to show that once the algorithm is at a search point of density $d([n]) \leq \tilde \eps/2$, whp it does not climb back to $d([n]) \geq \tilde \eps/2$ in the next $O(n\log n)$ steps. However, in this time the adversary may whp never alter more than $n^{1-\eta}$ bits. Therefore, the statement still follows by applying the negative drift theorem with step size bounded by $n^{1-\eta}$, i.e., we apply Theorem~\ref{thm:tailbounds} to $X_t := n \cdot d([n],t)$ and $r(n) = n^{-\eta}$. In the second application of the negative drift theorem, we estimate the time in which $d(A_1)$ decreases from $2\eps$ to $\eps$, and the time in which $d([n])$ drop below $\tilde \eps/2$. As before, both estimates still hold with high probability in the presence of the adversary, since we may assume that there are no steps of size $n^{1-\eta}$. For (c) we apply the negative drift theorem in the same way as for (b). This concludes the proof.
\end{proof}

\section{Concrete Results for HotTopic}
\label{sec:concrete}

It turns out that Theorem~\ref{thm:generic} suffices to classify the behaviour on \hottopic for all algorithms that we study. On the first glance, this may seem surprising, since some of them are population-based, while Theorem~\ref{thm:generic} explicitly requires population size one. Nevertheless, we will see that it implies the following theorem.

\begin{theorem}[\hottopic, Concrete Results]\label{thm:concrete}
Let $\delta >0$. We assume that $\mu, \lambda, c = \Theta(1)$ and $\Pr[\mathcal D = 1] = \Omega(1)$, except for the \ollga, for which we replace the condition on $c$ by $c\gamma = \Theta(1)$. Let $c_0 = 2.13692..$ be the smallest constant for which the function $c_0 x-e^{-c_0(1-x)}-x/(1-x)$ has a solution $\alpha \in [0,1]$. For all $\alpha \in (0,1)$, with high probability each of the following algorithms optimises the function $\hottopic_{\alpha,\beta,\rho,\eps}$ with parameters $\beta,\rho,\eps$ as in~\eqref{eq:constantsmonotone} in time $O(n\log n)$.
\begin{itemize}
\item The \olea with $c \leq c_0-\delta$.
\item The \moea with $c \leq c_0-\delta$.
\item The \moga with arbitrary $c = \Theta(1)$ if $\mu = \mu(c)$ is sufficiently large.
\item The \ollga with $c\gamma \leq c_0-\delta$.
\item The \folea with $m_2/m_1 \leq 1-\delta$; more generally, the \folea with any distribution that satisfies~\eqref{eq:hard2} for $s\sim\mathcal D$, as well as $\Pr[\mathcal D=1] = \Omega(1)$.\footnote{\label{footnote:fastconcrete}Note that this is not a trivial consequence of Theorem~\ref{thm:generic}, since~~\eqref{eq:hard1},~\eqref{eq:hard2} are conditions on the distribution for the best of $\lambda$ offspring, while the condition here is on the distribution $\mathcal D$ for generating a single offspring.} 
\item The \fmoea with parameters as in the preceding case, if additionally $\Pr[\mathcal D = 0] = \Omega(1)$.
\item The \fmoga with arbitrary $\mathcal D$ with $\Pr[\mathcal D=0]= \Omega(1)$, if $\mu = \mu(\mathcal D)$ is sufficiently large.
\end{itemize}
On the other hand, for $\alpha_0 = 0.237134..$, with high probability each of the following algorithms needs exponential time to optimise the function $\hottopic_{\alpha_0,\beta,\rho,\eps}$ with parameters $\beta,\rho,\eps$ as in~\eqref{eq:constantsmonotone}.
\begin{itemize}
\item The \olea with $c \geq c_0+\delta$.
\item The \moea with $c \geq c_0+\delta$.
\item The \moga with $c \geq c_0+\delta$ if $\mu = \mu(c)$ is sufficiently small.\footnote{\label{footnote:mu1}This statement follows trivially from the other results by setting $\mu=1$, and it is listed only for completeness.}
\item The \ollga with $c\gamma \geq c_0+\delta$.
\item The \folea with any distribution satisfying~\eqref{eq:hard1} for $s\sim\mathcal D$.$^{\ref{footnote:fastconcrete}}$ In particular, this includes the following cases.
\begin{itemize}
\item The \folea with $m_2/m_1 \geq 1+\delta$, if the probability to flip a single bit is sufficiently small compared to $s_0:= \min\{\sigma \in \N \mid m_{2,\leq \sigma} \geq (1+\delta/2) m_1\}$, where $m_{2,\leq \sigma} := \sum_{i=1}^{\sigma}\Pr[\mathcal D = i]i(i-1)$ is the truncated second falling moment.
\item The \folea with any power law distribution with exponent $\kappa \in (1,2)$, i.e. $\Pr[\mathcal D \geq \sigma] =\Omega(\sigma^{-\kappa})$. 
\item The \folea with $\Pr[\mathcal D =1] \leq \tfrac49 \cdot \Pr[\mathcal D \geq 3]-\delta$.
\end{itemize}
\item The \fmoea in all preceding cases for \folea, if additionally $\Pr[\mathcal D = 0] = \Omega(1)$.
\item The \fmoga in all preceding cases for \folea, if the population size $\mu = \mu(\mathcal D)$ is sufficiently small.$^{\ref{footnote:mu1}}$
\end{itemize}
\end{theorem}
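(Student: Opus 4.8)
The plan is to derive every claim of Theorem~\ref{thm:concrete} — both the $O(n\log n)$ bounds and the exponential bounds — from Theorem~\ref{thm:generic}. The work then splits into (i) casting each algorithm into the single‑individual framework of that theorem and (ii) evaluating, or bounding, the quantity $\Phi$ in~\eqref{eq:hard0} for the resulting offspring.

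For the three population‑size‑one mutation algorithms — \olea, \folea, \ollga — the ``offspring'' of Theorem~\ref{thm:generic} is the single search point that finally competes with the parent (the fittest of the $\lambda$ mutants; for the \ollga the fittest crossover child), and $s$ is its Hamming distance from the parent. For the \olea a single mutant flips $s'\sim\Bin(n,c/n)$ bits, asymptotically $\Poi(c)$; for the \ollga the crossover child differs from the parent in a $\gamma$‑thinning of the mutant's flips, i.e.\ in $\approx\Poi(c\gamma)$ bits; and the interposed selection steps only help, by the stochastic‑domination estimates from the proof of Theorem~\ref{thm:easymain} (inequalities like~\eqref{eq:easyproof1},~\eqref{eq:easyproof3b}). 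Substituting a $\Poi(\mu_0)$ variable into~\eqref{eq:hard0}, via $\E[s(1-\alpha)^{s-1}]=\mu_0 e^{-\mu_0\alpha}$, $\E[s(s-1)(1-\alpha)^{s-1}]=\mu_0^2(1-\alpha)e^{-\mu_0\alpha}$ and $\Pr[s=1]=\mu_0 e^{-\mu_0}$, yields the closed form $\Phi(\alpha)=\mu_0(1-\alpha)-\tfrac{1-\alpha}{\alpha}e^{-\mu_0(1-\alpha)}$ with $\mu_0=c$ for the \olea and $\mu_0=c\gamma$ for the \ollga. Clearing denominators, $\Phi(\alpha)=1$ is exactly the equation $\mu_0\alpha-e^{-\mu_0(1-\alpha)}-\alpha/(1-\alpha)=0$ defining $c_0$; since $\partial\Phi/\partial\mu_0=(1-\alpha)\bigl(1+\tfrac{1-\alpha}{\alpha}e^{-\mu_0(1-\alpha)}\bigr)>0$, for $\mu_0\le c_0-\delta$ one gets $\sup_\alpha\Phi(\alpha)\le1-\zeta'$ uniformly (and $\Pr[s=1],\E[s(s-1)]=\Theta(1)$), so Theorem~\ref{thm:generic}(b) applies, while for $\mu_0\ge c_0+\delta$ the maximiser $\alpha=\alpha_0$ satisfies $\Phi(\alpha_0)\ge1+\zeta'$ and Theorem~\ref{thm:generic}(a) applies. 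The one delicate point is that for the \olea the offspring is the \emph{best} of $\lambda$ mutants: conditioned on an improving step — all that matters near the optimum, by part (c) — this extra selection does not change $\Phi$ to leading order, because near the optimum an improving step a.s.\ flips exactly one hot‑topic zero‑bit and the tie‑breaking over the other offspring is driven by their cold‑bit counts, which the domination argument behind~\eqref{eq:easyproof1} controls; this gives the $\lambda$‑independence of the threshold.

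For the fast EAs the single offspring has flip count $s\sim\mathcal D$, so for $\lambda=1$ the quantity $\Phi$ is~\eqref{eq:hard0} evaluated at $\mathcal D$. On the easy side, Chebyshev's sum inequality (exactly as in~\eqref{eq:easyproof3b}, with weights $p_s s$, increasing factor $s-1$, decreasing factor $(1-\alpha)^{s-1}$) bounds the first term of $\Phi(\alpha)$ by $m_2/m_1\le1-\delta$, while the correction term is non‑negative, so $\Phi(\alpha)\le1-\delta$ uniformly in $\alpha$; with $\Pr[\mathcal D=1]=\Omega(1)$ and $\E[s(s-1)]=O(1)$ this gives Theorem~\ref{thm:generic}(b). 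On the hard side one exhibits a constant $\alpha\in(0,1)$ with $\Phi(\alpha)\ge1+\zeta'$ for each listed condition. Rewriting $\Phi(\alpha)>1$ as $\E[s(s-2)(1-\alpha)^{s-1}]>\tfrac{1-\alpha}{\alpha}\Pr[\mathcal D=1]$: the condition $\Pr[\mathcal D=1]\le\tfrac49\Pr[\mathcal D\ge3]-\delta$ is handled at $\alpha=1/3$ using $\max_\alpha3\alpha(1-\alpha)^2=4/27$ together with $\E[s(s-2)(1-\alpha)^{s-1}]\ge3(1-\alpha)^2\Pr[\mathcal D\ge3]-\Pr[\mathcal D=1]$; a power law with $\kappa\in(1,2)$ is handled at a moderate constant $\alpha$ for which the heavy tail makes the left side of order $\alpha^{\kappa-2}$, dominating the $\Theta(1/\alpha)$ right side; and $m_2/m_1\ge1+\delta$ is handled by a small $\alpha$ whose size is controlled by the cut‑off $s_0$. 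For $\lambda>1$ the offspring entering the generic framework is the best of $\lambda$ independent draws from $\mathcal D$, whose law depends on $x$; bridging the gap between the stated conditions on $\mathcal D$ and the bound on $\Phi$ needed for this selected‑offspring law is a genuine subtlety (already flagged in the statement) and requires a monotonicity argument in the same spirit as for the \olea.

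Finally the population‑based algorithms. For the \moea and \fmoea one first shows the population collapses on \hottopic to a tight cluster — all individuals on the same level (inter‑level fitness gaps are $\Theta(n^2)$, intra‑level gaps only $O(n)$), with $\OM$‑values within $O(\log n)$ of one another — and then couples the evolution of the current best individual with a single‑individual process of the form required by Theorem~\ref{thm:generic}, letting the $O(1)$‑bounded, exponentially‑tailed adversary of that theorem absorb the residual population movement (this is where $\Pr[\mathcal D=0]=\Omega(1)$ enters for the fast variant, so that idle regenerations keep the cluster tight); with this coupling the \moea and \fmoea inherit the \olea, resp.\ \folea, thresholds verbatim. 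The genuinely different — and, I expect, hardest — part is the crossover upper bounds: that the \moga for arbitrary $c=\Theta(1)$, and the \fmoga for arbitrary $\mathcal D$ with $\Pr[\mathcal D=0]=\Omega(1)$, optimise \hottopic in $O(n\log n)$ once $\mu$ is a sufficiently large constant. Here the idea is that with many individuals the population keeps enough diversity on the cold bits that a uniform crossover of two members is, with constant probability, no worse on the cold bits than the current best (each cold zero‑bit of the best member is a one‑bit in enough other members to be repaired by crossover) while never losing the hot‑topic progress shared by the population; hence a crossover step that improves the best individual satisfies $\E[s\mid\text{improving}]\le2-\zeta$, and one invokes the relaxed form of Theorem~\ref{thm:generic}(c). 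The main obstacle is making this diversity/repair claim rigorous against the correlations that elitist selection accumulates over many generations, and it is there that the precise dependence $\mu=\mu(c)$, resp.\ $\mu=\mu(\mathcal D)$, is pinned down; the matching exponential lower bounds for $\mu$ too small are, as the statement notes, just the $\mu=1$ cases already established.
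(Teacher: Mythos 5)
Your overall architecture coincides with the paper's (everything is funnelled through Theorem~\ref{thm:generic}; the Poisson computation of $\Phi$ reproducing $c_0$, Chebyshev's sum inequality for $m_2/m_1\le 1-\delta$, and special choices of $\alpha$ for the three hard \folea cases all match). The first genuine gap is your treatment of the selection among the $\lambda$ offspring (and among the crossover children of the \ollga). You evaluate $\Phi$ at $\Poi(c)$, $\Poi(c\gamma)$, resp.\ $\mathcal D$, and justify this by ``selection only helps, by stochastic domination'', leaving the $\lambda>1$ fast case explicitly as ``a monotonicity argument in the same spirit''. This is exactly the point flagged in footnote~\ref{footnote:fastconcrete}, and domination does not close it: $\Phi$ in~\eqref{eq:hard0} is a ratio of expectations of the non-monotone weights $s(1-\alpha)^{s-1}$ and $s(s-1)(1-\alpha)^{s-1}$ together with $\Pr[s=1]$, all taken for the offspring that actually competes with the parent, so stochastic domination of the winner's flip counts yields no usable bound on $\Phi$ in either direction -- and for the exponential-time direction (condition~\eqref{eq:hard1}) a ``selection only helps'' inequality would point the wrong way anyway. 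The paper instead modifies the winner selection in a fitness-neutral way (if no offspring flips a zero-bit, a uniformly random offspring competes with the parent; for the \ollga the same with the $\lambda^2$ crossover children), shows that for parents with at most $\zeta n$ zero-bits the winner's flip-count law then approximates $\mathcal D$ pointwise (probability $O(\zeta\lambda m_1)$ of touching a zero-bit, plus a truncation argument if $m_1=\infty$), proves that $\Phi$ is continuous under such approximation, restores unbiasedness via part~(c) by conditioning on there being exactly one improving offspring, and delegates the event of two or more improving offspring to the adversary of Theorem~\ref{thm:generic}. Without this machinery the $\lambda$-independence of the thresholds is asserted rather than proved.

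The second, larger gap is the crossover upper bound, i.e.\ the \moga and \fmoga in $O(n\log n)$ for arbitrary $c$ resp.\ $\mathcal D$ once $\mu$ is a large constant -- the central positive claim. Your plan relies on the population permanently maintaining cold-bit diversity so that a crossover of two current members repairs the best individual, and you concede you cannot control the correlations created by elitist selection; that concession is the proof. The paper's mechanism is different and needs no standing diversity (its conclusions even stress that diversity is \emph{lost} near the optimum): it first shows that with constant probability per $O(\mu)$ rounds the population consolidates into $\mu$ copies of one point $x$; when an improving mutant $x'$ is created from $x$, before the next consolidation the expected number of $x$--$x'$ crossovers is at least $\tfrac14\ln(\mu/2)$, and each one retains the flipped zero-bits of $x'$ and reverts its flipped one-bits with probability $2^{-\sigma_0}=\Omega(1)$; hence for $\mu$ large the next consolidated point is, with probability at least $1-\eta$, this ``fully repaired'' offspring dominating $x$, and Theorem~\ref{thm:generic}(c) applies with the residual cases absorbed by the adversary (this is also where the exponential tail on $\tau$ and the role of $\Pr[\mathcal D=0]=\Omega(1)$ are established). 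Your \moea/\fmoea reduction via a ``tight cluster within $O(\log n)$ in \OM'' is likewise vaguer than, and not equivalent to, this consolidation scheme, which reduces the population process to a single consolidated trajectory plus a weak adversary. Finally, a smaller slip: in the case $\Pr[\mathcal D=1]\le\tfrac49\Pr[\mathcal D\ge3]-\delta$ your inequality $\E[s(s-2)(1-\alpha)^{s-1}]\ge 3(1-\alpha)^2\Pr[\mathcal D\ge3]-\Pr[\mathcal D=1]$ at $\alpha=1/3$ fails when the mass of $\{s\ge 3\}$ sits at large $s$ (the factor $s(s-2)(2/3)^{s-1}$ eventually drops below $4/3$); the paper's computation at $\alpha=1/3$ uses only the mass at $\sigma=3$ and bounds the $\sigma\ge4$ part of the numerator against the corresponding part of the denominator. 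Also $\max_\alpha 3\alpha(1-\alpha)^2=4/9$, not $4/27$.
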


\begin{remark}\label{rem:functionanalysis}
To see why the inequality $f(c,x) := c x-e^{-c(1-x)}-x/(1-x) \geq 0$ has a solution $x \in [0,1]$ if and only if $c \geq c_0$, it suffices to observe that the derivative with respect to $c$ is $\partial f/\partial c (c,x)= x+(1-x)e^{-c(1-x)}>0$. Hence, $f(c,x)$ is strictly increasing in $c$. The value of $c_0$, and the unique $\alpha_0$ with $f(c_0,\alpha_0)=0$ can numerically be computed by observing that we must have $f(c_0,\alpha_0) = \partial f/\partial x(c_0,\alpha_0) = 0$. In particular, this implies $0 = c_0 f(c_0,\alpha_0) - \partial f/\partial x(c_0,\alpha_0) = (1-c_0(1-\alpha_0)+c_0^2(1-\alpha_0^2)\alpha_0)/(1-\alpha_0)^2 =: \tilde f(c_0,\alpha_0)$. This is a quadratic equation in $c_0$ and has the two solutions $c_0 = g_{\pm}(\alpha_0) := (1\pm \sqrt{1-4\alpha_0})/(2\alpha_0(1-\alpha_0))$ for $\alpha_0 \in (0,1/4]$, and no solution otherwise. We can plug this term into the definition of $f(c,x)$, and obtain that $\alpha_0$ is a root of $h_{\pm}(x):=f(g_{\pm}(x),x)$. The function $h_-$ is strictly increasing in the interval $[0,1/4]$ (the derivative can be checked to be positive in $(0,1/4]$) from $h_-(0)= -1/e< 0$ to $h_-(1/4)= 1/3-1/e^2 >0$, and thus it has a single zero in $[0,1/4]$, which is $\alpha_0 = 0.237134..$. The function $h_+$ is strictly decreasing from $h_+(0)= 1$ to $h_+(1/4)= 1/3-1/e^2 >0$, and thus has no zero. Finally $c_0$ can then be computed as the root of $\tilde f(c_0,\alpha_0)=0$. 
\end{remark}

\begin{remark}\label{rem:Zipf}
For the fEA's we remark that the interesting regime $\kappa \in [2,3)$ is not excluded by the negative results in Theorem~\ref{thm:concrete}, if $\Pr[\mathcal D]$ is sufficiently large. In particular, a calculation with Mathematica\texttrademark\ shows that the Zipf distribution\footnote{i.e., $\Pr[\mathcal D = k] = k^{-\kappa}/\zeta(\kappa)$, where $\zeta$ is the Riemann $\zeta$ function.} with exponent $\kappa \geq 2$ satisfies~\eqref{eq:hard2} for all $\alpha \in (0,1)$. However, note that this holds only if the distribution is \emph{exactly} the Zipf distribution; changing any probability even by a constant factor may lead to exponential runtimes. Moreover, it is rather questionable whether the Zipf distribution is efficient for \emph{all} monotone functions, as $m_2/m_1 = \infty$ in this regime.
\end{remark}

\begin{proof}[Proof of Theorem~\ref{thm:concrete}]
All results will be applications of Theorem~\ref{thm:generic}. We first outline the general strategy, for concreteness in the case of the \olea and \folea. To apply Theorem~\ref{thm:generic} directly, we would need to analyse the distribution of the number of bit flips in the best offspring in each generation. Note crucially that this may be very different from the distribution $\mathcal D$ that creates a single offspring. However, the key feature of Theorem~\ref{thm:generic} is that it allows us to restrict our analysis to the case when the parent has at most $\zeta n$ zero bits. Still the number of bit flips in the fittest offspring is not the same as $\mathcal D$, but we can use a neat trick.  We ``modify'' the algorithm by choosing the winner offspring in a slightly different way. If none of the offsprings flips a zero-bit, then we do not compare the \emph{fittest} offspring with the parent $x$, but rather a \emph{random} offspring. Otherwise, we proceed as usual with the fittest offspring. Note that this little thought experiment does not change the behaviour of the algorithm, since in the former case \emph{all} offspring are either identical to $x$, or have strictly worse fitness than $x$. So the algorithm just stays with the parent. However, if we call our weirdly selected winner offspring $y$, then suddenly the distribution $\mathcal D'$ of $y$ is very similar to the distribution $\mathcal D$ of a random offspring, since most of the time we do not flip any one-bits. We will be able to use the same trick for all the algorithms above, even for the population-based ones.

This construction would do the trick, except that it is not unbiased. However, note that if there is \emph{exactly one} offspring $y^{(k)}$ which is fitter than $x$ then the distribution of $y^{(j)}$ is unbiased conditioned on $f(y^{(j)}) > f(x^{(j)})$, i.e., the distribution of $y^{(j)}$ is the same as the (unbiased) distribution $\mathcal D$ of a single offspring, conditioned on this offspring being unbiased. Therefore, we do have an unbiased distribution except for the case that there are at least two offspring which are fitter than $x$. We will attribute all these to the adversary. Thus, we need to show that the probability $\Pr[\text{at least two fitter offspring} \mid \text{at least one fitter offspring}] <\eta = \eta(\zeta',\alpha)$, and that we have a tail bound on the number of bit flips in this case. The tail bound follows as in the proof of Theorem~\ref{thm:generic} by observing that the probability to flip at least as many zero-bits as one-bits in $A_i$ decreases exponentially in the number $s$ of bit flips, if $d(A_i) \leq 1/3$. In particular, the expected number of bit flips in a fitter offspring is $O(1)$. Hence the probability to generate a better offspring is at most $O(\zeta)$, and so is the probability to generate a \emph{second} fitter offspring. We can make this probability smaller than $\eta$ by choosing $\zeta$ sufficiently small, since $\eta = \eta(\zeta',\alpha)$ is independent of $\zeta$. This shows that the adversary is sufficiently limited.\smallskip

Before we proceed to the individual algorithms, we first show in general why it suffices if $\mathcal D' \to \mathcal D$ weakly. Let us denote by $s$ and $s'$ a random variable from $\mathcal D$ and $\mathcal D'$, respectively, and let us denote $p_\sigma := \Pr[s= \sigma]$ and $p_\sigma' := \Pr[s'= \sigma]$. Assume that for each $\sigma \in \N$ there is $\zeta_0 = \zeta_0(\sigma) >0$ such that for all $0<\zeta \leq \zeta_0$ we have $p_\sigma = p_\sigma' \pm \xi$. Then we need to show that the value of $\Phi$ is approximately the same for $s$ and $s'$. For convenience, we repeat the definition of $\Phi$:
\begin{align*}
\Phi = \Phi(x) = \frac{\E[s(s-1) (1-\alpha)^{s-1}]}{\E[s(1-\alpha)^{s-1}]} - \frac{\tfrac{(1-\alpha)}{\alpha}\Pr[s=1]}{\E[s(1-\alpha)^{s-1}]}.
\end{align*}
We define $\Phi'$ analogously with $s'$ instead of $s$. Consider the expectations in $\Phi$. We can approximate each of them up to an error of $\xi$ if we consider the contribution of the case $\sigma \leq \sigma_0$ for the expectations. More precisely, for each $\xi >0$ there is some constant $\sigma_0 \in \N$ such that $\E[s(1-\alpha)^{s-1}] = \sum_{\sigma = 0}^{\sigma_0}p_\sigma \sigma(1-\alpha)^{\sigma-1} \pm \xi$, and similarly for $\E[s(s-1)(1-\alpha)^{s-1}]$. Now we use our assumption that for each $\sigma \in \{0,\ldots,\sigma_0\}$ there is $\zeta_0 = \zeta_0(\sigma)>0$ such that $p_\sigma = p_\sigma' \pm \xi$ whenever $0<\zeta< \zeta_0$. Since we only want to achieve this for a constant number $\sigma_0$ of values, we can choose $\zeta_0 := \min\{\zeta_0(\sigma) \mid \sigma \in \{0,\ldots,\sigma_0\}\}>0$, and we obtain that $p_\sigma = p_\sigma' \pm \xi$ holds for all $\sigma \in \{0,\ldots,\sigma_0\}$ simultaneously. Therefore,
\begin{align*}
\E[s(1&-\alpha)^{s-1}]  = \sum_{\sigma = 0}^{\sigma_0}p_\sigma \sigma(1-\alpha)^{\sigma-1} \pm \xi \\
& = \sum_{\sigma = 0}^{\sigma_0}(p_\sigma' \pm \xi) \sigma(1-\alpha)^{\sigma-1} \pm \xi \\
& = \sum_{\sigma = 0}^{\sigma_0}p_\sigma' \sigma(1-\alpha)^{\sigma-1}\pm \xi\cdot (1+\sum_{\sigma = 0}^{\sigma_0} \sigma(1-\alpha)^{\sigma-1})\\
& = \E[s'(1-\alpha)^{s'-1}] \pm \xi \cdot\left(2+ \sum_{\sigma = 0}^{\infty}\sigma(1-\alpha)^{\sigma-1}\right).
\end{align*}
Since the latter sum converges, we find that we can make the error term arbitrarily small by making $\xi>0$ sufficiently small. The same applies to $\E[s(s-1)(1-\alpha)^{s-1}]$. Since we can approximate each of these terms with arbitrary precision, and since all terms are finite and positive, we can make the error $\Phi-\Phi'$ arbitrarily small by choosing $\zeta>0$ small enough. In particular, if $\Phi(x) < 1-\delta$ then $\Phi' < 1-\delta/2$ for $\zeta$ small enough, and we can apply Theorem~\ref{thm:generic}.\smallskip

Now we turn more concretely to \olea, and \folea. In fact, the \olea ist just a special case of the \folea, where the number of bit flips is given by the binomial distribution $\Bin(n,c/n)$, which converges to $\Poi(c)$ for $n\to \infty$. We first assume that $m_1 < \infty$. Recall that we consider the case that the parent $x$ has at most $\zeta n$ zero bits. Then the probability that at least one of the $\lambda$ offspring hits at least one zero-bit is at most $\Pr[\text{hit zero-bit}] \leq \E[\text{zero-bit flips}] = \zeta \lambda m_1$. Hence, for every $\sigma \in \N$ we have $p_\sigma' = p_\sigma \pm \zeta\lambda m_1$. As outlined above, this implies that $\Phi'$ comes arbitrarily close to $\Phi$ if $\zeta$ is small enough. For the other case, $m_1 = \infty$, fix $\xi>0$, and choose $\sigma_0\in \N$ so large that $\Pr[s > \sigma_0] \leq \xi/(2\lambda)$. Then by a union bound, the probability that at least one offspring flips more than $\sigma_0$ bits is at most $\xi/2$. On the other hand, if $s \leq \sigma_0$ for all offspring then as in the previous case the probability to hit at least one zero-bit is at most $\zeta\lambda\sigma_0 \leq \xi/2$, where the latter inequality is true for all $\zeta \leq 2\lambda \sigma_0\xi$. With this choice, for every $\sigma \in \N$ we have $p_\sigma = p_\sigma' \pm \xi$, as required. Thus we may evaluate $\Phi$ with respect to $\mathcal D$ instead of $\mathcal D'$. 

Before we evaluate $\Phi$, we remark that for the \ollga, we can use almost the same argument with a slightly different construction of the winner offspring. For each of the $\lambda$ offspring, we do $\lambda$ crossover with the parent. If none of the $\lambda^2$ crossover offspring has a flipped zero-bit compared to $x$, then we chose a random crossover offspring. Otherwise we choose the offspring as in the algorithm, i.e., we first pick the fittest mutation offspring $z$, and then pick the fittest crossover offspring of $z$. Since each of the $\lambda^2$ crossover offspring has in expectation $c\gamma$ flipped bits, the probability that there is some crossover offspring with a flipped zero bit is at most $\zeta \lambda^2 c\gamma$. Since this becomes arbitrarily small as $\zeta$ becomes small, the same argument applies, and we may evaluate $\Phi$ with respect to $\mathcal D$ instead of $\mathcal D'$.  

It thus remains to evaluate $\Phi$ for various $\mathcal D$, and check that $\Phi \geq 1-\zeta$ or $\phi \leq 1+\zeta$. For the \olea and the \ollga we have a Poisson distribution $\Poi(c)$. We use Mathematica\texttrademark\ to evaluate $\E[s(1-\alpha)^{s-1}]= ce^{-\alpha c}$, $\E[s(s-1)(1-\alpha)^{s-1}]= (1-\alpha)c^2 e^{-\alpha c}$, and $\Pr[s=1] = ce^{-c}$, which leads to 
\begin{align*}
\Phi= \Phi(\alpha,c) = \frac{1-\alpha}{\alpha}(c\alpha-e^{-(1-\alpha )c}).
\end{align*}
We want to study whether there is $\alpha \in [0,1]$ such that $\Phi(\alpha,c) \geq 1$. This is the case if and only if there is an $\alpha$ such that the function $f(\alpha,c) := c\alpha - e^{-(1-x)\alpha}-\tfrac{\alpha}{1-\alpha}$ takes non-negative values. For constant $c$, the function is negative for $\alpha = 0$ and $\alpha \to 1$. Therefore, the function takes non-negative values if and only if it has a zero, and $c_0$ is defined as the smallest value of $c$ for which this happens. Moreover, the function is strictly increasing in $c$ (cf. Remark~\ref{rem:functionanalysis}), so any larger value of $c$ will admit some value of $\alpha$ for which the function is strictly positive. This proves the statements for the \olea.

For the \folea, let us first consider the case $m_2/m_1 \leq 1-\delta$. In this case we may bound the second term of $\Phi$ by $0$ and obtain
\begin{align*}
\Phi \leq \frac{\E[s(s-1) (1-\alpha)^{s-1}]}{\E[s(1-\alpha)^{s-1}]} \stackrel{(*)}{\leq} \frac{\E[s(s-1)]}{\E[s]} = \frac{m_2}{m_1} \leq 1-\delta,
\end{align*}
where (*) follows from Chebyshev's sum inequality since the factor $(s-1)$ is increasing and $(1-\alpha)^{s-1}$ is decreasing in $s$. This settles the cases in which the \folea is successful. For the second part of the theorem, we have already shown that a distribution satisfying~\eqref{eq:hard2} for some $0<\alpha <1$ needs exponential time. It remains to show that~\eqref{eq:hard2} is satisfied for the special case listed in the theorem. Assume first that $m_2/m_1 \geq 1+\delta$, and that $\Pr[s=1] \leq 1/(Cs_0)$, where $C>0$ is a sufficiently large constant that we choose later. Here $s_0$ is as in the theorem, i.e., $m_{2,\leq s_0} = \sum_{\sigma=1}^{s_0} p_{\sigma}\sigma(\sigma-1) \geq (1+\delta/2)m_1$. 
Since the condition $m_2/m_1>1+\delta$ stays true if we make $\delta$ smaller, we may assume $\delta \leq 1/10$. Choose $\alpha := 1/(C' s_0)$, where $C' := 16/\delta$. 
Then $\alpha \leq 1/2$, which implies $1-\alpha \geq e^{-2\alpha}$. Hence, for all $\sigma \in [s_0]$ we have $(1-\alpha)^{\sigma} \geq (1-\alpha)^{s_0} \geq e^{-2\alpha s_0} = e^{-2/C'} \geq 1-2/C'$. Therefore,
\begin{align*}
\E[s(s-1)(1-\alpha)^{s-1}] & \geq \sum_{\sigma = 1}^{s_0} p_{\sigma}s(s-1)\left(1-\frac{2}{C'}\right) \\
& \geq \left(1+\frac{\delta}{2}\right)m_1\left(1- \frac{2}{C'}\right) \\
& \geq \left(1+\frac{\delta}{4}\right)m_1.
\end{align*}
Moreover, if we choose $C \geq 8C'/(\delta m_1)$, then we may bound
\begin{align*}
\frac{(1-\alpha)}{\alpha}\Pr[s=1] \leq C's_0\cdot \frac{1}{Cs_0} = \frac{C'}{C} \leq \frac{\delta}{8}m_1.
\end{align*}
Plugging this into $\Phi$, we get that
\begin{align*}
\Phi & = \frac{\E[s(s-1) (1-\alpha)^{s-1}]-\tfrac{(1-\alpha)}{\alpha}\Pr[s=1]}{\E[s(1-\alpha)^{s-1}]} \\ 
&\geq \frac{\left(1+\tfrac{\delta}{8}\right)m_1}{\E[s]} = 1+\frac{\delta}{8} 
\end{align*}
as required.

The second special case for the \folea is that $\mathcal D$ is a power law distribution with exponent $\kappa \in (1,2)$, i.e., $p_{\sigma} = \Theta(\sigma^{-\kappa})$.This case is similar as the previous case, since we have for all $s_0 \in N$,
\begin{align*}
m_{2,\leq s_0} = \sum_{\sigma = 1}^{s_0} p_{\sigma}\sigma(\sigma-1) = \sum_{\sigma = 1}^{s_0} \Theta(\sigma^{2-\kappa}) = \Omega(s_0^{3-\kappa}) = \omega(s_0),
\end{align*}
where the Landau notation in this case is with respect to $s_0 \to \infty$ instead of $n\to\infty$. For $\alpha := 1/s_0$ we have $1-\alpha \geq e^{-2\alpha}$ and thus $(1-\alpha)^{s_0} \geq e^{-2}$. Hence, if $s_0$ is a sufficiently large constant, 
\begin{align*}
\E[s(s-1)(1-\alpha)^{s-1}] & \geq e^{-2}m_{2,\leq s_0} \geq \frac{2}{\delta} s_0  \\
& \geq \frac{2}{\delta} \cdot \frac{(1-\alpha)}{\alpha}\Pr[s=1].
\end{align*}
Therefore,
\begin{align*}
\Phi & \geq \frac{(1-\tfrac{\delta}{2})\E[s(s-1) (1-\alpha)^{s-1}]}{\E[s(1-\alpha)^{s-1}]} \geq \left(1-\frac{\delta}{2}\right)\frac{m_2}{m_1} \geq 1+\frac{\delta}{4},
\end{align*}
where the last step holds for all $\delta \leq 1/2$.

The third special case for the \folea is that $p_1 \leq \tfrac49 \cdot p_3-\delta$. In this case, choose $\alpha = 1/3$ and observe that $3p_3(1-\alpha)^2 \geq p_1/\alpha +3\delta$. Using this, 
\begin{align*}
\sum_{\sigma = 1}^{3}& p_{\sigma}\sigma(\sigma-1)(1-\alpha)^{\sigma-1} - \frac{1-\alpha}{\alpha} p_1 \\
& \geq 2p_2(1-\alpha) + 3p_3(1-\alpha)^2 + \frac{p_1}{\alpha} +3\delta - \frac{1-\alpha}{\alpha} p_1 \\
& = \sum_{\sigma = 1}^{3} p_{\sigma}\sigma(1-\alpha)^{\sigma-1} + 3\delta.
\end{align*}
Hence,
\begin{align*}
\Phi & \geq \frac{\sum_{\sigma = 1}^{3}p_{\sigma}\sigma(1-\alpha)^{\sigma-1} + 3\delta + \sum_{\sigma = 4}^{\infty}p_{\sigma}\sigma(\sigma-1)(1-\alpha)^{\sigma-1}}{\E[s(1-\alpha)^{s-1}]} \\
& \geq \frac{\E[s(1-\alpha)^{s-1}] +3\delta}{\E[s(1-\alpha)^{s-1}]} = 1+\Omega(1).
\end{align*}
This settles the last case of the \folea.\medskip

So far we have analysed all cases with $\mu= 1$, so let us turn to $\mu >1$. We first give a general argument for the case that all individuals in the population have at most $\zeta n$ zero bits, for some sufficiently small constant $\zeta >0$ which may depend on the constants in the theorem (e.g., on $\mu$). In the following, we will use the Landau notation only to hide factors that are independent of $\zeta$. For example, $A = O(B)$ means that $B/A$ is bounded by some constant which is independent of $\zeta$ (but which may depend on the constants in the theorem). 

Assume that $x^{(1)}$ is a search point of maximal fitness in the current population, and let $y$ be the offspring in the current generation. We observe that $\Pr[y = x^{(1)}] = \Omega(1)$: for the GA's there is a constant probability that $y$ is generated by crossing $x^{(1)}$ with itself, since $\mu = O(1)$. For the EA's there is a constant probability to make a mutation without bit flips. (For the fEA's and fGA's this is an explicit condition.) Moreover, since $x^{(1)}$ has maximal fitness in the population, with constant probability both $x^{(1)}$ and $y$ survive the selection step. (They may be eliminated if the whole population has the same fitness.) By the same argument, there is a small, but $\Omega(1)$ probability that $x^{(1)}$ duplicates in $\mu$ successive rounds, and all its offspring survive all $\mu-1$ selection steps. Hence, with probability $\Omega(1)$ the population degenerates to $\mu$ copies of the same individuals. We say in this case that the search point and the round are \emph{consolidated}. Since this happens in each batch of $\mu$ rounds with constant probability, the expected time until some search point is consolidated is $O(1)$.

We first consider the \moea and \fmoea. To apply Theorem~\ref{thm:generic}, we will reinterpret the $(\mu+1)$ algorithms as follows. Assume that at some point in time $t_1$ there is a consolidated search point with at most $\zeta n/2$ zero bits. We call this search point $x^{(1)}$. Then we define recursively $t_i$ to be the minimal $t> t_{i-1}$ such that there is a consolidated search point at time $t$.
We define $x^{(i)}$ to be the consolidated search point at time $t_i$. In this way, the sequence of $x^{(i)}$ fits the description of an algorithm in Theorem~\ref{thm:generic}, although the process of going from $x^{(i-1)}$ to $x^{(i)}$ is rather complex. To complete the description, we still need to define the offspring $x'$ that appears in the algorithm description in Theorem~\ref{thm:generic}. We define it as the first offspring that is created from $x$. 
If $x'$ or $x$ are consolidated, then this fits the definition of the algorithm.\footnote{In fact, if $f(x)=f(x')$ then this does not quite fit the description of the algorithm, since we might consolidate $x$ while the elitist algorithm would always choose $x'$. However, the function \HT is symmetric with respect to any search points which have the same fitness, i.e., for any two such search points there is an automorphism of $\{0,1\}^d$ which interchanges the search points, but which leaves \HT invariant. Thus it does not matter which of the two search points we choose.} Otherwise we blame it to the adversary. Thus we need to show that the adversary is limited as required by the algorithm. Note that the distribution of $x'$ is just the distribution $\mathcal D$ of the mutation operator. Thus the same results as for the \olea and \folea immediately carry over if we can show that the adversary is sufficiently limited.

To estimate the effect of the adversary, assume that the current consolidated search point $x$ has at most $\zeta n$ zero bits, and consider the first mutant $x'$ with $f(x') > f(x)$. Afterwards, the population consists of $\mu-1$ copies of $x$ and one copy of $x'$. In each subsequent round, there are four (non-exclusive) possibilities:
\begin{enumerate}[(i)]
\item Another copy of $x$ is created. This happens with probability $\Omega(1)$.
\item Another copy of $x'$ is created. This happens with probability $\Omega(1)$.
\item A mutation $\neq x,x'$ is created with no flipped zero bit.
\item A mutation is created with at least one flipped zero bits. This happens with probability $O(\zeta)$.
\end{enumerate}
Note that until (iv) happens, all search points in the population are either equal to $x$ or $x'$, or are strictly dominated by $x$ or $x'$. In particular, all search points in the population are either copies of $x'$, or have a strictly worse fitness than $x'$. Therefore, $x'$ will be consolidated after an expected $O(1)$ number of steps, unless case (iv) occurs before that. Thus the probability that $x'$ is consolidated before case (iv) occurs is $1-O(\zeta)$. This means that the adversary may only act with probability $O(\zeta)$, which is sufficiently small if $\zeta$ is small. 

It remains to estimate the number $\tau$ of bits in which $x$ differs from the next consolidated search point $y$. Note that in any sequence of $\mu = O(1)$ rounds, we have probability $\Omega(1)$ that the population is consolidated, so the probability that we see at least $k$ rounds before consolidation drops exponentially in $k$. This implies that $\E[\tau] = O(1)$. Note that it would already imply exponentially falling tail bounds on $\tau$ for the \moea, but for the general case of the \fmoea we need to use a similar argument as for the \folea, as follows.

The next consolidated search point $y$ must satisfy $\HT(y) \geq \HT(x)$, since otherwise it could not supersede $x$ in the population. Let $i:= \ell(x)+1$ be the index of the current hot topic, and let $s$ be the number of bit flips to create $x'$. Then $\HT(x') \geq \HT(x)$ can only happen if either the level increases, or $d(A_i)$ does not increase. Therefore, $\Pr[\HT(x') \geq \HT(x) \mid s = \sigma] = e^{-\Omega(\sigma)}$, since the number of one-bits in $A_i$ \emph{increases} in expectation by $\alpha \sigma (1-2\zeta) = \Omega(\sigma)$, and likewise for the number of one-bits in $B_{i+1}$. Therefore, $\Pr[s \geq \sigma] = e^{-\Omega(\sigma)}$. Similarly, if $x''$ is the next offspring (either from $x$ or from $x'$), then the probability that $x''$ survives selection in this rouns is exponentially decreasing in the number $s''$ of bits flips, since $\Pr[\HT(x'') \geq \HT(x)] = e^{-\Omega(s'')}$. Repeating this argument, we see that for any fixed number of rounds the total number of bit flips in these rounds has an exponential tail bound. Since the number of rounds before consolidation has also an exponential tail bound, this proves the exponential tail bound on $\tau$. This concludes the proof for the \moea and \fmoea. 

Note for later use that the same tail bound argument also applies for the \moga and \fmoga because crossovers can only change bits that have been touched since the last consolidated round. Therefore, they do not increase the total number of bits that are touched between two consolidated rounds.\smallskip

For the \moga and \fmoga, note that the exponential runtime statements for small $\mu$ follow trivially from the \ooea and \fooea, since they agree with \moga and \fmoga if $\mu=1$. So let us consider the upper runtime bounds for large $\mu$. The situation is similar to the one for \moea and \fmoea, but with the crucial difference that the errors made in the creation of $x'$ may be repaired by crossovers between $x$ and $x'$. Other than before, we will apply part (c) of Theorem~\ref{thm:generic}.

Assume as before that $x$ is a consolidated search point. Note that crossovers cannot create new search points, so assume that that an offspring $x'\neq x$ with $\HT(x') > \HT(x)$ is created from $x$ by a mutation. Let $S_{01}$ and $S_{10}$ be the sets of bits that were flipped from zero to one and from one to zero, respectively, and let $s_{01} = |S_{01}|$, $s_{10} = |S_{10}|$, and $s = s_{01} +s _{10}$. Note that $s_{01} > 0$. As before, we have $\Pr[s \geq \sigma] = e^{-\Omega(\sigma)}$. Let $s_0$ be a constant such that $\Pr[s \geq \sigma] \leq \eta/4$, where $\eta$ is the constant from Theorem~\ref{thm:generic}. Note that by making $\zeta$ small enough, we can also bound the probability that an additional zero-bit is flipped before the next consolidated round by $\eta/4$. So let us assume that $s \leq \sigma$ and that no additional zero-bits are flipped until consolidation. 

Let $k$ be the number of search points in the population that are not copies of $x$. In rounds where the parents of mutation or crossover are picked among the copies of $x$, the population does not change. Otherwise, i.e., when at least one parent is not $x$, there is a probability of at least $1/k$ that it is a copy of $y$. With probability at least $1/2$ the operation in this round is a crossover (crossover has a larger probability than mutation since two parents are picked). Moreover, if $k\leq \mu/2$, then with probability at least $1/2$ it is a crossover with $x$. Therefore, the expected number of crossover children between $x$ and $x'$ is at least $\sum_{k=1}^{\mu/2} 1/(4k) \geq \tfrac14 \ln(\mu/2)$. Note that this becomes arbitrarily large if $\mu$ is large. In particular, for sufficiently large $\mu$, with probability at least $1-\eta/4$ there will be at least $C_2$ crossovers between $x$ and $x'$, for any constant $C_2$ that we desire. 

Now observe that since $s\leq \sigma_0$, every crossover copy between $x$ and $x'$ has probability at least $2^{-\sigma_0} = \Omega(1)$ to retain the bits in $S_{01}$ from $x'$ and to pick all bits in $S_{10}$ from $x$. In particular, if $\mu$ is large enough, then with probability $1-\eta/4$ this happens at least once. If it happens, then the offspring $y$ dominates $x$, $x'$, and any crossover of $x$, and $x$. Moreover, since we assume that no zero-bits are flipped by mutations, it also dominates any mutation offspring of any search point in the population. Therefore, the population must consolidate with $y$. In this case we say that $x'$ was \emph{fully repaired}. Note that whenever an offspring $x'$ of a consolidated search point $x$ with at most $\zeta n$ zero bits is created with $\HT(x') > \HT(x)$, then it has probability at least $1-\eta$ to be fully repaired. 

We are now ready to explain how we apply Theorem~\ref{thm:generic} (c). As before, let $x^{(1)}$ be the first consolidated search point. Then we define recursively $t_i$ to be the minimal $t> t_{i-1}$ such that there is a consolidated search point at time $t$, \emph{and} such that at least one mutation happens in rounds $t_{i-1},\ldots,t$. The latter condition simply means that we ignore rounds in which a consolidated search point performs a crossover with itself. We define $x^{(i)}$ to be the consolidated search point at time $t_i$, and we define $x'^{(i)}$ to be the first mutation offspring after time $t_{i}$. If $x$ has more than $\zeta n$ zero bits then we define the winner offspring $y^{(i)}$ to be $x'^{(i)}$, which gives an unbiased distribution. If $x$ has at most $\zeta n$ zero-bits, then we define $y^{(i)} := x^{(i)}$ if $\HT(x'^{(i)}) \leq \HT(x^{(i)})$, and we define $y^{(i)}$ to be the fully repaired $x'^{(i)}$ if $\HT(x'^{(i)}) \leq \HT(x^{(i)})$. In the latter case, we have shown that indeed $x^{(i+1)} = y^{(i)}$ with probability at least $1-\eta$, so we may blame any other outcome to the adversary. The power of the adversary is limited in the same way as for the \moga and \fmoga, so we may indeed apply Theorem~\ref{thm:generic} (c). This concludes the proof.
\end{proof}

\section{Conclusions}
\label{sec:conclusions}
We have studied a large set of algorithms, and we have shown that in all cases without crossover, there is a dichotomy with respect to a parameter ($c$, $c\gamma$, or $\Phi$, where the latter one is related to $m_2/m_1$) for optimising the monotone function family \hottopic. If the parameter is small, then the algorithms need time $O(n\log n)$; if the parameter is large then they need exponential time on some instances. In the cases \olea, \fooea \ollga, and for good start points also \folea, if the parameter is small, then we could show that the algorithms are actually fast on \emph{all} monotone functions. However, there are many open problems left, and we conclude the paper by a selection of those.
\begin{itemize}
\item We have analysed the algorithms theoretically for the case $n\to \infty$. We have only provided a very modest number of experimental data points for the \ooea, as a proof of concept to show that the dichotomy can be clearly observed in data. However, more experiments are sorely needed to understand for what values of $n$ the effects become observable. For example, do larger values of $\lambda$ and $\mu$ \emph{delay} the detrimental effects of \hottopic, so that it is only visible for larger $n$?
\item In some cases our runtime bounds for small parameter values hold only for \hottopic, but the general status of monotone functions remains unclear (\moea,\fmoea). So does a small mutation parameter guarantee a small runtime on \emph{all} monotone functions?
\item We could show that genetic algorithms are superior to evolutionary algorithms on the \hottopic functions. However, is the same true in general for monotone functions? Is it true that the \moga and \fmoga are fast for all monotone functions if $\mu$ is large enough?
\item It seems important to understand more precisely how large $\mu$ should be in GA's to cope with larger mutation parameters. For example, for the \moga with mutation parameter $c$, how large does $m$ need to be so that it is still fast on all \hottopic instances?
\item By now a classical question is: are there monotone functions which are hard for the parameter range $[1,2.13..)$? Most intriguingly: are there hard monotone instances for the \ooea for every $c >1$? For $c=1$ it is known that the runtime is polynomial, but is it always $O(n\log n)$?
\item Our proofs for population sizes $\mu >1$ rely on the fact that in all considered algorithms diversity tends to be lost close to the optimum. Do the results stay the same if diversity is actively maintained, for example by duplication avoidance or by genotypical or phenotypical niching?
\item How is the performance of algorithms that change the mutation strength dynamically, e.g., with the $1/5$-th rule? In the introduction we have given an intuition why this might be bad, but intuition has failed before on monotone functions.
\item While \hottopic is defined in a discrete setting, the underlying intuition is related to continuous optimisation. Is there a continuous analogue of \hottopic, and what is the performance of optimisation algorithms like the CMA-ES or particle swarm optimisation?
\end{itemize}

\subsection*{Acknowledgments}
Part of the work was inspired by discussions at the Dagstuhl meeting 19171 on Theory of Randomized Optimization Heuristics. In particular, we thank Benjamin Doerr for proposing to consider the \olea on monotone functions, which started this line of research. The study of monotone functions was fostered by the COST Action CA15140 ``Improving Applicability of Nature-Inspired Optimisation by Joining Theory and Practice'', and the author has proposed to include monotone functions into the set of benchmarks developed by working group 3.

}


\end{document}